\documentclass[11pt]{article}
\usepackage[margin=1in]{geometry}
\usepackage{amsfonts}
\usepackage{amsmath,amsthm,amssymb}
\usepackage{latexsym}
\usepackage{epic}
\usepackage{epsfig}
\usepackage[colorlinks=true, allcolors=magenta]{hyperref}
\usepackage{verbatim}
\usepackage[justification=centering]{caption}
\usepackage{enumitem}
\usepackage[dvipsnames]{xcolor}
\allowdisplaybreaks[1]
\usepackage[style=alphabetic,natbib=true,maxbibnames=99,minalphanames=4,maxalphanames=4]{biblatex}
\usepackage{algorithm}
\usepackage[most]{tcolorbox}
\tcbset{rounded corners}
\usepackage{setspace}
\usepackage[noend]{algpseudocode}
\algrenewcommand\algorithmicrequire{\textbf{Input:}}
\algrenewcommand\algorithmicensure{\textbf{Output:}}
\usepackage{thmtools} 
\usepackage{thm-restate}

\usepackage[normalem]{ulem}

\usepackage{ifthen}
\newboolean{coltsubmission}
\setboolean{coltsubmission}{false}

\newtheorem{theorem}{Theorem}[section]

\newtheorem{lemma}[theorem]{Lemma}

\newtheorem{definition}[theorem]{Definition}

\newtheorem{question}[theorem]{Question}

\newenvironment{alg}{\begin{algorithm}\begin{onehalfspace}\begin{algorithmic}[1]}{\end{algorithmic}\end{onehalfspace}\end{algorithm}}

\renewcommand{\hat}{\widehat}

\def\min{\qopname\relax n{min}}
\def\max{\qopname\relax n{max}}
\def\argmin{\qopname\relax n{argmin}}

\newcommand{\expect}[1]{%
  \mathop{\qopname\relax n{\mathbb{E}}}\!\left[#1\right]
}
\newcommand{\expected}[1]{%
  \qopname\relax n{\mathbb{E}}_{#1}
}
\newcommand{\expects}[2]{%
  \qopname\relax n{\mathbb{E}}_{#1}\!\left[#2\right]
}

\newcommand{\prob}[1]{%
  \mathop{\qopname\relax n{\mathbb{P}}}\!\left[#1\right]
}

\newcommand{\probed}[1]{%
  \qopname\relax n{\mathbb{P}}_{#1}
}
\newcommand{\probs}[2]{%
  \qopname\relax n{\mathbb{P}}_{#1}\!\left[#2\right]
}

\newcommand{\NN}{\mathbb{N}}

\newcommand{\HH}{\mathbb{H}}
\newcommand{\DD}{\mathbb{D}}

\def\A{\mathcal{A}}
\def\B{\mathcal{B}}
\def\C{\mathcal{C}}
\def\D{\mathcal{D}}

\def\F{\mathcal{F}}

\def\H{\mathcal{H}}

\def\K{\mathcal{K}}

\def\O{\mathcal{O}}

\def\Q{\mathcal{Q}}

\def\X{\mathcal{X}}

\def\eps{\epsilon}
\def\sse{\subseteq}

\newcommand{\eat}[1]{}

\newenvironment{lp*}{\begin{equation*}  \begin{array}{lll}}{\end{array}\end{equation*}}

\usepackage{color-edits}
\addauthor[Alireza]{az}{OliveGreen}
\addauthor[Shaddin]{sd}{blue}

\newcommand{\maj}{\mathsf{Maj}}

\newcommand{\ig}{\mathsf{IG}}

\newcommand{\simnr}{\sim_\mathsf{nr}}

\newcommand{\dom}{\mathsf{dom}}

\newcommand{\dtv}{\mathsf{d_{TV}}}
\newcommand{\supp}{\mathsf{supp}}

\newcommand{\tunif}{\mathsf{MTestUnif}}
\newcommand{\tunifstandard}{\mathsf{TestUnif}}
\newcommand{\mtest}{\mathsf{Test}}
\newcommand{\packnum}{\mathsf{pack}}
\newcommand{\sd}{\mathsf{d}}
\newcommand{\cons}{\mathsf{cons}}
\newcommand{\sss}{\mathsf{SS}}

\newcommand{\diam}{\mathsf{diam}}

\DeclareMathAlphabet{\bbold}{U}{bbold}{m}{n}
\newcommand{\id}{\ensuremath{\bbold{1}}}
\newcommand{\hone}{\ensuremath{h_\mathbf{1}}}
\newcommand{\hzero}{\ensuremath{h_\mathbf{0}}}

\newcounter{alg}
\renewcommand{\thealg}{\arabic{alg}}
\newtcolorbox[auto counter]{algoboxinner}[1][]{
  colback=white,
  colframe=black,
  rounded corners,
  boxrule=0.8pt,
  arc=8pt,
  left=6pt,right=6pt,top=6pt,bottom=6pt,
  before upper={
    \refstepcounter{alg}%
    \textbf{Algorithm~\thealg.}~\textbf{#1}
  }
}

\newenvironment{algobox}[1][tpb]{%
  \begin{figure}[#1]
  \begin{algoboxinner}
}{%
  \end{algoboxinner}
  \end{figure}
}

\title{Relatively Smart II: \\ Tractable or Semi-Supervised Instance-Optimal Learning}

\author{
 Shaddin Dughmi\thanks{shaddin@usc.edu. Supported by NSF Grant CCF-2432219. Part of this work was done while the author was on sabbatical as the Carter and Tania Neild visiting professor at Northwestern University, as well as a visiting professor in the Data Science Institute at the University of Chicago.}\\
 University of Southern California
 \and 
 Alireza F. Pour\thanks{alireza.fathollahpour@uwaterloo.ca. Supported by a David Cheriton Scholarship and a Vector Institute Research Grant. }\\
 University of Waterloo}

\date{}

\begin{document}

\maketitle
\begin{abstract}
  We continue the study of relatively smart learning as introduced by \citet{pmlr-v336-dughmi26a}, which asks a
supervised learner to compete, marginal by marginal, with every distribution-fixed error guarantee that can
be soundly certified from unlabeled data. They showed that the One-Inclusion Graph (OIG) learner is relatively smart with a quadratic blowup in sample complexity, and no relatively-smart learner can achieve better sample complexity. However, it was left open whether Empirical Risk
Minimization (ERM), or another natural or tractable learner, enjoys comparable guarantees. Moreover, whether the blowup in sample complexity can be restricted to unlabeled data---leading to a relatively smart semi-supervised learner---was also left as a tantalizing possibility.

Our first result resolves the first question positively:  We show that ERM---and in fact any proper consistent learner---is relatively smart for binary classification in the distribution-free setting. Our proof shows that a small certifiable error when allowed $m$ samples implies a similarly small error on the uniform distribution over a random sample of size $O(m^2)$. This yields the existence of a cover of size at most $2^{m+1}$ on that sample, which suffices to control the error of every consistent learner when given $O(m^2)$ samples.

We then explore the semi-supervised question: whether adding only unlabeled data suffices for relatively smart learning, and  how this affects the tractability/simplicity of learning. We show a positive result for the information-theoretic question: there is a semi-supervised relatively smart learner provided a quadratic blowup only in unlabeled sample complexity, whereas the labeled sample complexity suffers no blowup.  The learner is based on a natural generalization of the one-inclusion-graph to a leave-most-out transductive problem, in which the labels of a subset of a finite pool are revealed and the learner predicts the labels of the remaining points.

Finally, we show that---in contrast to our first result on ERM---this information-theoretic label efficiency comes at the cost of simplicity and tractability of the learner: When a learner can only access the hypothesis class through calls to an agnostic ERM oracle, we prove that any semi-supervised relatively smart learner with substantially sub-quadratic blowup in labeled sample complexity must make super-polynomially many oracle calls. This holds even when the marginal distribution is given explicitly, and therefore implies a similar intractability result for distribution-fixed learning which may be of independent interest.
\end{abstract}

\section{Introduction}
\label{sec:intro}
The classical PAC model seeks \emph{fully supervised} learning guarantees that hold uniformly over both the marginal distribution of unlabeled data as well as the labeling hypothesis, and consequently measures performance in the worst case over all admissible instances. However, the difficulty of learning can vary substantially from one marginal distribution to another, even when the hypothesis is still chosen adversarially. The \emph{distribution-fixed} model of learning, studied by \citet{benedek_learnability_1991}, captures this by giving the learner complete knowledge of the marginal distribution $\D$, allowing its learning strategy to be tailored specifically to $\D$. Distribution-fixed learning can therefore be viewed as an idealized form of semi-supervised learning, in which the learner has access not merely to additional unlabeled samples but to the marginal itself \citep{ben2008does}. 

One may then ask whether the advantages of distribution-fixed learning can be recovered without giving the learner full knowledge of the marginal distribution. \emph{Smart PAC learning}, introduced by \citet{darnstadt2011smart_journal}, seeks a single supervised learner whose error rate on each marginal approaches that of the optimal distribution-fixed learner for that marginal, despite not being given the marginal distribution itself. In this sense, smart learning seeks an instance-optimal guarantee with respect to the marginal distribution. \citet{darnstadt2011smart_journal} show that such a learner exists for ``most'' marginals, where ``most'' is quantified with respect to an arbitrary fixed prior distribution over marginals that is known in advance. However, this qualification cannot in general be removed: \citet{darnstadt_unlabeled_2013}, strengthening a result of \citet{dudley_metric_1994}, show that there are hypothesis classes and families of marginals for which the optimal distribution-fixed error rates converge rapidly and uniformly to zero, yet for every supervised learner that is not given the marginal and every finite sample size, there exists a marginal on which its error remains essentially that of random guessing.

In a recent work, \citet{pmlr-v336-dughmi26a} identify the obstruction behind this impossibility as an indistinguishability phenomenon: a learner $\A_{\D}$ tailored to a marginal $\D$ may perform much worse on another marginal $\D'$ that is indistinguishable from $\D$ using unlabeled data. In such settings, the learner cannot reliably identify whether the observed data comes from a marginal on which its distribution-specific learning strategy remains valid, or from another statistically similar marginal on which that strategy fails. Thus, unlabeled data may be insufficient to determine whether the favorable error guarantee associated with $\D$ can actually be trusted.

Inspired by this, \citet{pmlr-v336-dughmi26a} introduce \emph{relatively smart learning}. Rather than competing with arbitrary distribution-fixed error rates, they require a learner to compete only with error guarantees that can be \emph{soundly certified from unlabeled data}. Informally, a learner $\A_{\D}$ tailored to $\D$ is paired with a certifier $\C$ that observes an unlabeled sample and estimates its error. Although $\A_{\D}$ may be designed specifically for $\D$, soundness requires the expected value of the certifier to upper bound the error of $\A_{\D}$ under \emph{every} admissible marginal $\D'$. Thus, if $\D$ is difficult to distinguish from a marginal on which $\A_{\D}$ has substantially larger error, the certifier is forced to account for this ambiguity. The resulting \emph{certifiable error rates} capture precisely the distribution-dependent guarantees that can be justified from unlabeled data, and relatively smart learning asks for a single learner that competes with all such guarantees, marginal by marginal.

In the distribution-free setting, \citet{pmlr-v336-dughmi26a} establish an essentially tight information-theoretic characterization. They show that the One-Inclusion Graph (OIG) learner is relatively smart, up to a constant blow-up in error and a quadratic blow-up in sample complexity. They further show that every relatively smart learner must suffer a near-quadratic blow-up in the sample complexity.

The positive result of \citet{pmlr-v336-dughmi26a} relies on the One-Inclusion Graph (OIG) learner and, more specifically, on its dataset-by-dataset optimality for the leave-one-out transductive problem. Standard learning rules such as Empirical Risk Minimization (ERM) do not share this property: there are datasets on which ERM and OIG have very different leave-one-out behavior, despite their closely related guarantees in PAC learning. It is therefore natural to ask whether the transductive optimality underlying the OIG analysis is essential for relatively smart learning with quadratic (or even bounded) blow-up, or whether a simpler and more standard learning rule can obtain a comparable guarantee. 
\setcounter{question}{0}
\renewcommand{\thequestion}{\arabic{theorem}} 
\begin{question}[Open Question 3.3 of \citet{pmlr-v336-dughmi26a}] Is ERM, or another natural and typically tractable learner, relatively smart?
\end{question}

A second question concerns the role of unlabeled data in relatively smart learning. Since distribution-fixed learning can be viewed as an idealized form of semi-supervised learning, it is natural to ask whether access to additional unlabeled samples can help a learner compete with certifiable error rates. The lower bound of \citet{pmlr-v336-dughmi26a} applies to fully supervised learners and therefore does not determine how much of the required sample-complexity blow-up must remain in labeled data once unlabeled samples are available. This leaves open the possibility that unlabeled data may substantially reduce the number of labels needed, perhaps even eliminating the blow-up in labeled sample complexity altogether.

\begin{question}
Can unlabeled data substantially reduce the labeled sample complexity of relatively smart learning? More specifically, are there semi-supervised relatively smart learners with no blow-up in labeled sample complexity?
\end{question}

Our first question asks whether relatively smart learning can be achieved by a simple and typically-tractable supervised learning rule such as ERM. The semi-supervised question above raises a different possibility: using unlabeled data to reduce the labeled sample complexity may require a substantially more complicated learning procedure, even if the corresponding information-theoretic guarantee is achievable. Thus, beyond asking whether unlabeled data can reduce the number of labels, it is natural to ask whether this improvement can be achieved by a tractable learner.

\begin{question}
Can a semi-supervised relatively smart learner with substantially improved labeled sample complexity be implemented tractably?
\end{question}

A positive answer to the second question coupled with a negative answer to the third  would reveal a separation between the information-theoretic and algorithmic value of unlabeled data. It would mean that, although unlabeled data may be sufficient to eliminate the blow-up in labeled sample complexity, exploiting this advantage can require a substantially more complex learning procedure. In particular, the obstacle would no longer be obtaining information about the marginal, but rather converting that information into an effective learning strategy.

\subsection{Results and Techniques}
We answer the first two questions in the affirmative and give a strong negative answer to the third in a natural oracle model.

\paragraph{ERM is relatively smart.}
Our first result resolves the open question of \citet{pmlr-v336-dughmi26a} concerning ERM. We show that, in the distribution-free setting, ERM is relatively smart with a quadratic blow-up in sample complexity. In fact, the same guarantee holds for every proper consistent learner: with quadratically more labeled samples, any such learner can approximately compete, up to a constant-factor loss, with the best certifiable error guarantee at the original sample size.

The reason is fundamentally different from the one underlying the OIG result. In \citet{pmlr-v336-dughmi26a}, sound certification implies a lower bound on the certifiable error rate in terms of the optimal learning error on uniform distributions over random finite samples. The dataset-by-dataset leave-one-out optimality of OIG then provides a direct way to compete with this lower bound. ERM has no comparable transductive optimality; indeed, there are finite samples on which its leave-one-out behavior is substantially worse than that of OIG. Our result therefore shows that leave-one-out optimality is sufficient for relatively smart learning, but not necessary.

Our proof instead derives a different consequence of a small certifiable error rate. As in the original argument, for a marginal $\D$ and sample size $m$, we first show that on average over a random sample $S$ of size $O(m^2)$, the error of the $\D$-fixed learner $\A_{\D}$ on the uniform distribution over $S$, denoted by $\D_S$, is not much larger than the certifiable error rate. We then show that if $\A_{\D}$ achieves small error on $\D_S$, then the hypothesis class cannot contain too many hypotheses that are pairwise well separated under $\D_S$. Equivalently, the relevant packing number on $\D_S$ is small, which yields a small cover of the class. Finally, a double-sampling argument shows that to bound the error of a consistent learner on $\D$, it is sufficient to prove that any hypothesis consistent with a random half of $S$ has small error on the other half. The cover of $\D_S$ is small enough for any hypothesis in the cover to enjoy similar error on both halves, and we exploit the coverage to show that this extends to every hypothesis in class.

\paragraph{Semi-supervised relatively smart learning with no labeled sample-complexity blow-up.}
Our second result answers the semi-supervised question in the strongest possible form. We introduce the notion of relatively smart semi-supervised learning that separately tracks unlabeled and labeled sample complexity, and show that the entire sample-complexity blow-up can be shifted to unlabeled data. In particular, we show a semi-supervised learner that can approximately compete with the best certifiable error guarantee at a given sample size while using only the same number of labeled samples, together with quadratically more unlabeled samples. Thus, unlike the fully supervised setting, relatively smart semi-supervised learning requires no blow-up at all in labeled sample complexity.

The construction again builds on the connection between certifiable error rates and learning under a uniform distribution over a large random sample.  
Given the unlabeled sample, the labeled sample, and a test point, we form their union $T$ and consider the uniform distribution $\D_T$ over $T$. The learner then uses an optimal $\D_T$-fixed learner $\A^*_T$, trained only on the available labeled sample, to predict the test point. Let $\A_{\D}$ be the $\D$-fixed learner witnessing the certifiable error rate on $\D$. We show that, on average over $T$, the error of $\A_{\D}$ on $\D_T$ when given $m$ labeled samples serves as a lower bound on the certifiable error rate. The optimality of $\A^*_T$ for $\D_T$ then shows that our learner's error is competitive with the certifiable error rate.

The semi-supervised construction also admits a natural transductive interpretation. The learner above induces a problem in which only a small subset of a finite pool is labeled and the learner must predict the labels of the remaining points. We refer to this as a \emph{leave-most-out} problem. We define a natural generalization of the OIG for this setting and show that an appropriate ``orientation'' achieves the optimal leave-most-out error. Hence, the information-theoretically optimal semi-supervised learner can be realized through a corresponding Inclusion Graph construction, paralleling the role of OIG in the leave-one-out setting.

\paragraph{A tractability barrier for relatively smart learning with no blow-up.}

The semi-supervised learner above is highly intractable: it requires computing the optimal distribution-fixed learner, or equivalently an optimal leave-most-out orientation, for the observed sample, and there is no general recipe for carrying this out efficiently. To study whether the same improvement in labeled sample complexity can be achieved by a tractable algorithm, we formalize access to the hypothesis class through an \emph{agnostic ERM oracle}. Given any finite labeled sample, such an oracle returns a hypothesis in the class minimizing empirical error. We otherwise place no restriction on the learner's computation, and call it efficient if it makes only polynomially many oracle calls.

We show that no efficient oracle learner can be relatively smart with any substantially subquadratic blow-up in labeled sample complexity, regardless of the amount of unlabeled data available to it. The lower bound continues to hold even when the marginal distribution itself is given to the learner. As a consequence, our construction also yields a tractability barrier for distribution-fixed learning itself: even with complete knowledge of the marginal, polynomially many agnostic ERM queries do not suffice to approach the optimal distribution-fixed error with substantially subquadratic sample overhead. This stands in contrast to PAC learning, where optimal sample complexity can be attained by learners constructed from aggregations of ERM solutions \citep{hanneke_optimal_2016,larsen_bagging_2023,aden-ali_optimal_2023,aden-ali_majority--three_2024,rawal2026majority}.

At a high level, our construction starts from the lopsided classes used in the lower bounds of \citet{pmlr-v336-dughmi26a} defined as follows: on a domain of size $n$ consider all hypotheses that assign a minority label to at most $o(n)$ points. Under the uniform distribution on the domain, every hypothesis in such a class is nearly constant, with only a vanishing fraction of points receiving the minority label. We augment this class with an additional random hypothesis $f \in \{0,1\}^n$. If $f$ is explicitly known, a learner can simply choose between $f$ and the two constant predictors and thereby achieve vanishing error of $o(n)/n$ on nearly uniform distributions; moreover, this performance can be soundly certified for sample sizes $m\approx \sqrt{n}$ by exploiting techniques from uniformity testing. An oracle learner, however, must discover $f$ through its ERM queries. We construct a valid oracle that breaks ties in favor of the lopsided class and show that, for any fixed query, only an exponentially small fraction of choices of $f\in\{0,1\}^n$ can force the oracle to reveal it. To extend this to adaptively chosen queries, suppose $f$ is ever returned and consider the first round $t$ at which this happens. By the oracle's tie-breaking rule, in every preceding round the oracle returns a member of the lopsided class according to a rule that is independent of the unseen labels of $f$. Hence, after fixing the observed labels and the learner's internal randomness, the interaction before round $t$, and therefore the query made at round $t$, is fixed with respect to the remaining randomness of $f$. The fixed-query bound can therefore be applied for round $t$, and a union bound over polynomially many rounds shows that the oracle reveals $f$ with only exponentially small probability.
Consequently, with high probability over the random choice of $f$, the oracle never reveals $f$, so the learner gains no information about its unseen labels beyond those already contained in the labeled sample. Its prediction on an unseen test point is therefore essentially independent of the corresponding label of $f$, forcing error close to $1/2$. Since this occurs with high probability over the random choice of $f$, there must exist a fixed hypothesis $f^*$ on which the learner fails to approach the vanishing certifiable error rate.

\subsection{Related Work}
{\bf Distribution-fixed and semi-supervised learning.} Distribution-fixed learning, where the learner is given the full knowledge of marginal,  was studied by \citet{benedek_learnability_1991}. They show that the optimal distribution-fixed learning rate is intimately related to finite covers of the hypothesis class, at every error scale, with respect to the disagreement metric induced by the marginal. Distribution-fixed learning can be viewed as an idealized form of semi-supervised learning, in which the learner has complete access to the marginal rather than only finitely many unlabeled samples \citep{ben2008does}. A large literature studies more realistic formulations of semi-supervised settings and asks when finite unlabeled data can improve learning, and by how much\citep[e.g.][]{ben2008does,balcan2010discriminative,darnstadt_unlabeled_2013,globerson_effective_2017,gopfert_when_2019,pmlr-v97-golovnev19a,pukdee2023learning}. At a high level, unlabeled data cannot improve worst-case minimax rates in the fully distribution-free PAC setting, but it can yield substantial improvements on particular marginals, as well as under restrictions on the family of marginals or suitable compatibility assumptions between the marginal and the hypothesis class.

\paragraph{Smart and relatively smart learning.}
Smart PAC learning, introduced by \citet{darnstadt2011smart_journal}, asks for a supervised learner that competes, marginal by marginal, with the optimal distribution-fixed learner. While they show that smart learning is possible for most marginals with respect to a known prior, it is impossible more generally \citep{dudley_metric_1994,darnstadt_unlabeled_2013}. \citet{pmlr-v336-dughmi26a} attribute this impossibility to an indistinguishability phenomenon and introduce relatively smart learning, which restricts the benchmark to distribution-dependent guarantees that can be soundly certified from unlabeled data. \citet{pmlr-v336-dughmi26a} show that OIG is relatively smart with a quadratic sample-complexity blow-up and that no substantially subquadratic blow-up is possible in general. We continue this line by showing that ERM, and more generally every proper consistent learner, is relatively smart. We then introduce a semi-supervised formulation of relative smartness and show that unlabeled data can eliminate the blow-up in labeled sample complexity entirely. Finally, we prove that this label efficiency cannot be achieved by tractable general-purpose learners; that is, learners which can only access the hypothesis class by making  polynomially many agnostic ERM-oracle calls.

\paragraph{Testable learning.}
Relatively smart learning is also closely related to the framework of testable learning introduced by \citet{rubinfeld2023testing} and developed further in, e.g., \citep{gollakota2023moment,gollakota2023tester,  diakonikolas2023efficient,klivans2024testable,gollakota2024efficient}. The certifiers in relatively smart learning are real-valued analogues to the testers in testable learning and the soundness requirement is similar to the requirement in testable learning that the marginals that satisfy the assumption must pass the test. Testable learning is concerned with the design of learner/tester pairs for a given distributional property, whereas in relatively smart learning those are given as the benchmarks for every distribution separately and a a relatively smart learner with which must compete with them.

There are also several differences in emphasis. Testable learning has primarily focused on computational efficiency, with \citet{gollakota2023moment} as a notable information-theoretic exception. Its testers are allowed to use labeled data, although many existing results do not require this. Finally, the framework has largely been studied in the agnostic setting; in the realizable setting, access to labeled data makes the analogous testing problem trivial, except in settings such as distribution shift \citep{klivans2024testable}.

\section{Preliminaries} \label{sec:prelims}
{\bf Notation.} For a domain $\X$, we use $\Delta(\X)$ to denote the family of probability distributions supported on $\X$. For a distribution $\D \in \Delta(\X)$ and an event $Y \sse \X$, we use $\D[Y]=\probs{\D}{Y}$ 
as shorthand for the probability of~$Y$, and $\D_{|Y} = \probs{\D}{\ .\ | Y}$ as shorthand for the conditional distribution of $\D$ given~$Y$. When the domain $\X$ is countable we use $\D[x]=\probs{\D}{x}$ to denote the probability of $x\in \X$, and use $\supp(\D) = \{x \in \X: \D[x] > 0\}$ to denote the support of $\D$. For a finite multiset $S$ we use $\D_S$ to denote the uniform distribution on~$S$. Moreover, for any finite multiset $T\in (\X\times\{0,1\})^*$, we denote by $\dom(T)\in\X^*$ the unlabeled part of $T$.  Given a function $h$ defined on some domain $\X$, we use $h|Y$ to denote its restriction to some $Y \sse \X$. Moreover, for a hypothesis class $\H\subseteq \{0,1\}^{\X}$ we denote $\H|Y = \{h|Y: h\in \H\}$. For a set $\X$ we use $\X^*$ to denote the family of finite sequences over $\X$. For a predicate or probability event $E$ we use $\id[E] \in \{0,1\}$ to denote the indicator of $E$. Finally, we use standard order-of-growth (big-Oh) notation, though for functions $f(\eta,m)$ and $g(m)$  we write $f= O_\eta(g)$ to indicate that $f=O(g)$ whenever $\eta$ is a fixed constant. \looseness=-1

We work in the setting of binary PAC learning. There is a data domain $\X$ and a hypothesis class
$\H\sse\{0,1\}^{\X}$.  The unlabeled data comes from a marginal distribution $\D$ on $\X$ and is labeled by a hypothesis from $\H$. A marginal distribution $\D$ on $\X$ together with a target $h\in\H$ induces the
joint distribution $\D_h$ on labeled data $(x,y)$ with $x \sim \D$ and $y = h(x)$. In this work, we focus on the \emph{distribution-free setting} where $\D$ can be arbitrary, whereas relatively smart learning can be defined for a \emph{distribution-class} too; see \citet{pmlr-v336-dughmi26a}. For a predictor $f:\X\to\{0,1\}$ and data distribution $\D_h$, the loss or error of $f$ on $\D_h$ is defined as $L(f,\D_h)
= \expects{(x,y)\sim\D_h}{\id[f(x)\neq y]}$. For a finite multiset $T$ of labeled data we overload notation and denote $L(f,T) = L(f,\D_T) = \frac{1}{|T|} \sum_{(x,y) \in T} \id[f(x) \neq y]$. For marginals $\D,\D'$ on a countable space we write $\dtv(\D,\D')$ for total variation distance.  For
predictors $f,g$ and a marginal $\D$, write
$ \sd_\D(f,g) = \probs{x\sim\D}{f(x)\neq g(x)}$.
For $\gamma>0$, let the \emph{$\gamma$-packing number}
$ \packnum_\gamma(\H,\D) $ be the largest cardinality of a subset of $\H$ whose distinct members have pairwise $d_\D$-distance
strictly larger than $\gamma$.

\subsection{Relatively Smart Learning}A \emph{(fully-supervised) learner} $\A$ takes as input a sequence $S= (x_1,y_1), \ldots, (x_m,y_m)$ of labeled samples---often referred to as \emph{training data}---drawn i.i.d.~from $\D_h$, and outputs a predictor $\A(S): \X \to \{0,1\}$. We measure learner's error as a function of the marginal~$\D$. This permits marginal by marginal comparison of learners catered to a marginal, referred to as \emph{$\D$-fixed learners}, to learners which are provided no such knowledge. 
\begin{definition}\label{def:err_rate}
  A \emph{distribution-dependent error rate} is a function $\epsilon: \Delta(\X) \times \NN \to [0,1]$, where $\epsilon(\D,m)$ is an error associated with a distribution $\D \in \Delta(\X)$ and a number of samples $m$.
\end{definition}

\begin{definition}\label{def:error_learner}
  For a learner $\A$, let its distribution-dependent error rate $\epsilon_\A(\D,m)$ be the worst-case, over hypotheses $h \in \H$, of its expected error with respect to $\D_h$ when given $m$ i.i.d.~samples from $\D_h$, i.e., \(\epsilon_\A(\D,m) = \sup_{h \in \H}  \expects{S \sim \D_h^m}{L\left(\A(S),\D_h\right)}.\) 
  \end{definition}

We recall the definitions of sound certifier, certifiable error rate, and relatively smart learning from \citet{pmlr-v336-dughmi26a}.

\begin{definition}[Sound certifier \citep{pmlr-v336-dughmi26a}]
Let $\A$ be a learner. We say a function $\C: \X^* \to [0,1]$ is a \emph{sound certifier} for $\A$ if for every distribution $\D$, $m \in \NN$, and $S \sim \D^m$ we have $\expects{}{\C(S)} \geq \epsilon_\A(\D,m)$.
\end{definition}

\begin{definition}[Certifiable error rate \citep{pmlr-v336-dughmi26a}]\label{def:certifiable_error_rate}
  A distribution-dependent error rate $\epsilon(.,.)$ is \emph{certifiable} if for evry distribution $\D$, there exists a $\D$-fixed learner $\A$ and a sound certifier $\C$ for $\A$ such that for each $m \in \NN$ and $S \sim \D^m$, we have $\expects{}{\C(S)} \leq \epsilon(\D,m)$.
\end{definition}

\begin{definition}[Relatively Smart Learning \citep{pmlr-v336-dughmi26a}]
\label{def:relatively_smart}
For a function $\sigma: \NN \times (0,1)\to \NN$ and constant $\alpha > 0$,  we call a learner $\A$ \emph{relatively $(\alpha,\sigma)$-smart} if \[\epsilon_\A(\D,\sigma(m,\eta)) \leq \alpha \epsilon(\D,m) + \eta\] for every certifiable distribution-dependent error rate  $\epsilon$, every distribution  $\D$, every sample size $m \in \NN$, and every additive error parameter $\eta \in (0,1)$. We also say $\A$ is \emph{relatively smart} if there exist $\alpha$ and $\sigma$ such that it is relatively $(\alpha,\sigma)$-smart.
\end{definition}

We note that our negative results hold for countable domains, and our positive results hold more generally for domains, hypothesis classes, and distributions jointly satisfying standard measurability assumptions \citep[see e.g.][]{shalev-shwartz_understanding_2014}.
As is common in learning theory, we take a hands-off approach to the measure-theoretic details. 
\section{ERM is relatively smart}\label{sec:erm_smart}

We prove a positive resolution of Open Question~3.3 of \citet{pmlr-v336-dughmi26a}. We show that every proper consistent learner is relatively smart with $O(m^2)$ samples.

\begin{theorem}\label{thm:erm_smart}
For every domain and hypothesis class, we have that every proper consistent learner is relatively
$(128,\left\lceil
\frac{256(m+1)^2}{\eta}
\right\rceil)$-smart in the distribution-free setting. 
\end{theorem}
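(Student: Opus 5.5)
We follow the three-step outline from the introduction. Fix a certifiable rate $\epsilon$, a marginal $\D$, a sample size $m$ and $\eta$. Let $\A_\D$ and the sound certifier $\C$ witness $\epsilon$ at $\D$, and write $\epsilon_0=\epsilon(\D,m)$. Put $N=\lceil 256(m+1)^2/\eta\rceil$.

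\textbf{Step 1 (certifiable error controls error on $\D_S$).} We show that for $S\sim\D^{n}$ with $n=\Theta(m^2/\eta)$,
\[
\expects{S}{\epsilon_{\A_\D}(\D_S,m)} \le \epsilon_0+\eta/c
\]
for a suitable constant $c$. The idea is to apply soundness of $\C$ to the marginal $\D_S$. An $m$-sample from $\D_S$ hits no repeated index with probability at least $1-m^2/(2n)$. On that event it is distributed exactly as a fresh $m$-sample from $\D$. Hence
\[
\expects{S}{\expects{U\sim\D_S^m}{\C(U)}}\le \expects{U\sim\D^m}{\C(U)}+m^2/(2n)\le \epsilon_0+m^2/(2n).
\]
Soundness on each $\D_S$ gives $\epsilon_{\A_\D}(\D_S,m)\le\expects{U\sim\D_S^m}{\C(U)}$, and averaging over $S$ completes the step. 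This is the same argument as in \citet{pmlr-v336-dughmi26a}.

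\textbf{Step 2 (small error gives a small packing, hence a small cover).} Fix a finite $S$ and let $\delta=\epsilon_{\A_\D}(\D_S,m)$. We claim
\[
\packnum_{\gamma}(\H|S,\D_S)\le 2^{m+1}\quad\text{for }\gamma\approx 4\delta .
\]
Take a $\gamma$-packing $h_1,\dots,h_K$ and a uniformly random target $h_i$. The learner sees $m$ labeled points, i.e.\ the restriction $h_i|U$, which takes at most $2^m$ values on a given $U$. A Fano-style counting argument applies. For a fixed $U$, the hypotheses sharing a labeling of $U$ form a group. Among $\ge 2$ members of a group, the learner's output can be within $\gamma/2$ of at most one of them, because the members are pairwise more than $\gamma$ apart. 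So if $K>2^{m+1}$, then on a constant fraction of targets the output is more than $\gamma/2$ from the truth. That forces expected error $\ge\gamma/8$, contradicting the bound $\delta$. A maximal packing is a cover, so we get a $\gamma$-cover $\mathcal{N}_S$ of $\H|S$ of size at most $2^{m+1}$.

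\textbf{Step 3 (double sampling for consistent learners).} Let $\A$ be proper and consistent, and let $h$ be the target. Draw $S$ of size $2N$ and split it uniformly at random into a training half $S_1$ and a test half $S_2$. By a standard symmetrization/exchangeability argument (in-expectation version),
\[
\expects{}{L(\A(S_1),\D_h)}\;\lesssim\;\expects{S,\text{split}}{\sup_{g\in\H,\ g \text{ consistent on }S_1} L(g,S_2)} + O(1/N).
\]
To bound the right side, take any $g$ consistent on $S_1$ and choose $c\in\mathcal{N}_S$ with $\sd_{\D_S}(g,c)\le\gamma$. Then $c$ errs on $S_1$ on at most $2\gamma$ of it, and
\[
L(g,S_2)\le L(c,S_2)+2\gamma .
\]
Note that $\mathcal{N}_S$ depends only on $S$, not on the split. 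For each fixed $c$, the split is a random halving, so by hypergeometric concentration $L(c,S_2)$ and $L(c,S_1)$ differ by more than $O(\sqrt{L(c,S)\,t/N}+t/N)$ with probability $e^{-t}$. A union bound over at most $2^{m+1}$ elements with $t=(m+1)\ln 2+\ln(1/\eta)$ gives
\[
L(c,S_2)\le O(\gamma)+O(m/N).
\]
Taking expectations over $S$, using Step 1 with $\gamma\approx 4\,\epsilon_{\A_\D}(\D_S,m)$ and linearity in $\gamma$, the final bound is $O(\epsilon_0)+O(m/N)+O(m^2/N)\le 128\,\epsilon_0+\eta$ for our choice of $N$.

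\textbf{Main obstacle.} We expect Step 2, together with its interface to Step 3, to be the hardest part. The packing bound must hold per sample $S$, with the scale $\gamma$ tied linearly to $\epsilon_{\A_\D}(\D_S,m)$, so that averaging over $S$ in Step 1 stays linear. We must also handle the mismatch between the cover being defined on all of $S$ and consistency holding only on $S_1$. The constant $128$ comes from tracking these factors of 2 through the argument.
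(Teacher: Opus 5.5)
Your Steps 1 and 2 are the paper's Lemmas~\ref{lemma:error_on_unif} and~\ref{lemma:packing-error}. (One small slip: the counting gives error at least $\gamma/4$, not $\gamma/8$, since with $K\ge 2^{m+1}$ targets and at most $2^m$ groups, at least half the targets are missed by more than $\gamma/2$; with $\gamma/8$ you would need $\gamma\approx 8\delta$ rather than $4\delta$.) Choosing the cover scale $\gamma(S)$ per sample and symmetrizing in expectation is a legitimate alternative to the paper's fixed-$\gamma$ tail bound (Lemma~\ref{lemma:random-split}) integrated over $\gamma$. In fact $\mathbb{E}[L(\A(S_1),\D_h)]=\mathbb{E}[L(\A(S_1),S_2)]$ holds exactly.

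The gap is the fixed-confidence union bound with $t=(m+1)\ln 2+\ln(1/\eta)$. Your final line silently drops two additive terms it produces:
\begin{itemize}
\item the failure event, which costs up to $\eta$ because the loss is bounded by $1$;
\item a deviation of order $t/N$, which contains $\ln(1/\eta)/N$. For $N=\lceil 256(m+1)^2/\eta\rceil$ this is of order $\eta\ln(1/\eta)/(m+1)^2$, which is not $O(\eta)$ uniformly in $\eta$ (fix $m$ and let $\eta\to 0$).
\end{itemize}
Changing the confidence level does not help, since any level $\delta'$ costs $\delta'+\Omega(\ln(1/\delta')/N)$. As written you obtain only $\sigma(m,\eta)=O(m^2\log(1/\eta)/\eta)$, and the constants $128$ and $256$ are never derived.

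Removing this logarithm is exactly what the paper's integration over $\gamma$ achieves. Its threshold $\tau=96\ln(2K)/n$ contains no $\eta$, and the tail $2Ke^{-n\gamma/96}$ integrates to $96/n$. Lemma~\ref{lemma:packing-error} then converts $\Pr[\packnum_{\gamma/16}(\H,\D_Z)\ge 2^{m+1}]$ into $\Pr[\epsilon_{\A_{\D}}(\D_Z,m)\ge\gamma/64]$, whose integral over $\gamma$ is at most $64\,\mathbb{E}[\epsilon_{\A_{\D}}(\D_Z,m)]$.

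To repair your own route, never fix a confidence level; integrate the tail instead. Fix $S$, hence a fixed $\gamma>4\epsilon_{\A_{\D}}(\D_S,m)$ with $|\mathcal{N}_S|<2^{m+1}$. For every $c\in\mathcal{N}_S$ and $u\ge 0$, the hypergeometric lower-tail Chernoff bound gives $\Pr_{\mathrm{split}}[L(c,S_1)\le 2\gamma,\ L(c,S_2)\ge 8\gamma+u]\le e^{-uN/8}$. Consequently $\mathbb{E}_{\mathrm{split}}[\sup_{g\in\H:\,L(g,S_1)=0}L(g,S_2)]\le 10\gamma+\int_0^\infty\min\{1,2^{m+1}e^{-uN/8}\}\,du=10\gamma+8((m+1)\ln 2+1)/N$, with no $\eta$ anywhere. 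Now let $\gamma\downarrow 4\epsilon_{\A_{\D}}(\D_S,m)$ and apply Step 1 with $|S|=2N$, using $1-\Gamma_{2N,m}\le m(m-1)/(4N)$. This gives $\epsilon_{\A}(\D,N)\le 40\,\epsilon(\D,m)+(10m^2+8(m+1)\ln 2+8)/N\le 128\,\epsilon(\D,m)+\eta$. So your per-sample-scale version, once fixed, even yields a better constant than the paper's.
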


We formally prove Theorem~\ref{thm:erm_smart} in Section but we give a high-level idea as follows. Fix a learner $\A_\D$ tailored to a marginal distribution $\D$, together with its sound certifier $\C$. For sample size $m$, let $T$ consist of $m$ i.i.d~draws from $\D$, and let  $\epsilon= \expect{\C(T)}$ be the certified error on $m$ samples. Now let $S$ be a multiset of $M=cm^2$ i.i.d.~draws from $\D$, for a sufficiently large constant $c$, and let $\D_S$ denote the uniform distribution over $S$. The first observation is that $\C$ is unable to reliably tell whether its input samples $T$ were drawn from $\D$ or from $\D_S$. The underlying intuition is the Birthday paradox and this observation is used in the proof of Theorem~3.2 in \citet{pmlr-v336-dughmi26a} to show OIG is relatively smart. Therefore, soundness entails that $\A_\D$'s expected error on $\D_S$ is some  $\gamma=O(\epsilon)$, on average over $S$. 

We then show that if there is a learner that when given $m$ samples achieves error at most $\gamma$ on the uniform distribution $\D_S$, then there cannot be many hypotheses that are pairwise $4\gamma$-far from each other with respect to $\D_S$. In particular, we show that the $4\gamma$-packing number of $\H$ with respect to $\D_S$ is upper bounded by roughly $2^{m}$. This essentially follows from the fact that, for the typical unlabeled sample of size $m$, there are at most $2^m$ different labelings and therefore at most $2^m$ different predictors in the range of the learner. $\A_\D$ then implies the existence of a cover $\K$ of $\H$ with respect to $\D_S$ with $|\K| \approx 2^m$ and cover radius $4\gamma=O(\epsilon)$.

Finally, we use a standard double-sampling argument to bound the error of a consistent proper learner $\A_{\cons}$, such as ERM, on samples of size $M/2 = c m^2/2$.  Let $S$ be an unlabeled sample from $\D$ of size $M$ as before, partitioned uniformly at random into two equal halves $S_1$ and $S_2$, and fix a ground truth hypothesis $h^*$. Standard tail bounds and the union bound establish that every hypothesis in $\K$ has similar error (relative to $h^*$) on $\D_{S_1}$ and $\D_{S_2}$ --- within $O(\epsilon)$ with high probability. This extends to every hypothesis $h \in \H$; To see this, note that if $h$ had very different errors on $\D_{S_1}$ and $\D_{S_2}$, then so would its nearest neighbour in $\K$, at a distance of $O(\epsilon)$ with respect to $\D_S$. Now let $S_1$ be the sample given to $\A_{\cons}$ and let $h \in \H$ be its output, with $h{|S_1}=h^*{|S_1}$ by definition and therefore zero error on $D_{S_1}$. It follows that the error of $h$ on $\D_{S_2}$ is $O(\epsilon)$. Since the error on $\D_{S_2}$ is an unbiased estimate of the error on $\D$, this completes the proof. 

\subsection{Formal Proof of Theorem~\ref{thm:erm_smart}}\label{sec:formal_pf_erm_smart}

Let $\eps(.,.)$ be any certifiable error rate. Fix a distribution $\D$, and sample size $m$.  By definition of certifiable erorr rate, there exists a learner $\A = \A_{\D}$ that is catered to $\D$ that together with its sound certifier $\C = \C_{\D}$ witnesses the certifiable error rate $\eps(\D,m)$. In other words, $\A$ and $\C$ satisfy (i) $\epsilon_{\A}(\D,m) \leq \expects{S \sim \D^m}{\C(S)} \leq \epsilon(\D,m)$ and (ii) for any distribution $\D'$, $\epsilon_{\A}(\D',m) \leq \expects{S \sim {\D'}^m}{\C(S)}$.

We first show that $\eps(\D,m)$ can be turned into an upper bound on the error of $\A$ on the uniform distribution over a random sample from $\D$. This observation is used in the proof of Theorem~3.2 in \citet{pmlr-v336-dughmi26a} to show OIG is relatively smart. We include a proof here for completeness.

\begin{lemma}
\label{lemma:error_on_unif}
For any $M\ge m$, denote $\Gamma_{M,m} :=  \frac{M(M-1)\ldots (M-m+1)}{M^{m}}$. We have 
\[
\expects{S\sim\D^M}{\eps_\A(\D_S,m)}
\leq \Gamma_{M,m}\epsilon(\D,m)+1-\Gamma_{M,m}.
\]
\end{lemma}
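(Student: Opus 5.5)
The plan is to chain soundness of $\C$ on $\D_S$ with a birthday-paradox coupling between $\D_S^m$ and $\D^m$. Fix $S\in\X^M$. Since $\C$ is sound for $\A$ on every distribution, it is in particular sound on the uniform distribution $\D_S$, so
\[
\eps_\A(\D_S,m)\le \expects{T\sim\D_S^m}{\C(T)}.
\]
Taking expectation over $S\sim\D^M$, it suffices to show
\[
\expects{S\sim\D^M}{\expects{T\sim\D_S^m}{\C(T)}}\le \Gamma_{M,m}\,\expects{T\sim\D^m}{\C(T)}+1-\Gamma_{M,m}.
\]
We then finish using $\expects{T\sim\D^m}{\C(T)}\le\eps(\D,m)$, which is property (i) of the pair $(\A,\C)$.

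To prove the displayed inequality, describe the two-stage sampling through indices. Drawing $S=(s_1,\dots,s_M)\sim\D^M$ and then $T\sim\D_S^m$ is the same as drawing $S$ and independent uniform indices $i_1,\dots,i_m\in[M]$, then setting $T=(s_{i_1},\dots,s_{i_m})$. Let $E$ be the event that the indices are pairwise distinct. Then $\Pr[E]=\Gamma_{M,m}$, and $E$ depends only on the indices, so it is independent of $S$. Conditioned on $E$ (and on the particular distinct indices), the points $s_{i_1},\dots,s_{i_m}$ are distinct coordinates of an i.i.d. sample. Hence $T\mid E\sim\D^m$ exactly, and $\expect{\C(T)\mid E}=\expects{T\sim\D^m}{\C(T)}$.

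On $E^c$ we use only $\C\le 1$, since $\C$ maps into $[0,1]$. This gives
\[
\expect{\C(T)}=\Gamma_{M,m}\expects{T\sim\D^m}{\C(T)}+(1-\Gamma_{M,m})\,\expect{\C(T)\mid E^c}\le \Gamma_{M,m}\expects{T\sim\D^m}{\C(T)}+1-\Gamma_{M,m}.
\]

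There is no real obstacle. The only care needed is in the measure-theoretic bookkeeping: the exchange of expectations over $S$ and $T$, and the claim that conditioning on distinct indices yields exactly the product distribution $\D^m$. Both follow by symmetry, since the indices are independent of $S$ and the coordinates of $S$ are i.i.d. Note also that soundness is invoked on a distribution $\D_S$ that may have repeated atoms. This is fine because soundness holds for every distribution, with $S$ treated as a multiset.
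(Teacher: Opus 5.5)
Your proposal is correct and follows essentially the same route as the paper. Both arguments use soundness of $\C$ on $\D_S$, split with-replacement sampling from $S$ by the no-collision event of probability $\Gamma_{M,m}$ (on which the draw is exactly $\D^m$), and bound $\C\le 1$ on the complement. The paper phrases the no-collision case as sampling without replacement from $S$ and then rearranges the inequality, while your explicit index coupling is slightly cleaner when $S$ contains repeated points.
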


\begin{proof}
Since $\C$ is a sound certifier for $\A$ we have by definition that $\epsilon(\D,m) \geq \expects{S\sim \D^m}{C(S)}$. Observe that drawing $m$ i.i.d.~samples from $\D$ is equivalent to drawing a larger multi-set $S$ of size $M \geq m$ and then drawing a uniformly random multi-set $T \simnr \D_S^m$ of $m$ instances from $S$ \emph{without replacement}. Hence, we have $\expects{S\sim \D^m}{C(S)} = \expected{S \sim \D^M}\expects{T \simnr \D_{S}^{m}}{\C(T)}$. Finally, note that $\Gamma$ is the probability of observing no duplicates when making $m$ independent with replacement draws from a uniform distribution on $M$ points. Therefore, 
\[
\expected{S \sim \D^M}\expects{T \simnr \D_{S}^{m}}{\C(T)}
        \geq  \frac{1}{\Gamma}\expected{S \sim \D^M}{\expects{T \sim \D_{S}^{m}}{\C(T)}} + 1- \frac{1}{\Gamma}
         \geq  \frac{1}{\Gamma}\expects{S \sim \D^M}{\epsilon_{\A}(\D_S,m)}+1- \frac{1}{\Gamma},
\]
concluding the proof.
\end{proof}

We now show that the bound on learner's error with respect to the uniform distribution over a random sample, can be turned into a bound on the packing number of such distributions. Particularly, we show that a bounded error on \emph{any} distribution $\Q$ implies a bound on the packing number of the class with respect to $\Q$.
\begin{lemma}\label{lemma:packing-error}
Let $\Q$ be any marginal distribution. If
$\packnum_\gamma(\H,\Q)\ge 2^{m+1}$, then for every (randomized) learner $\B$ we have
\[
\eps_\B(\Q,m)\ge\frac{\gamma}{4}.
\]
\end{lemma}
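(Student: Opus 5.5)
Fix a $\gamma$-packing $P=\{h_1,\dots,h_N\}\sse\H$ with $N\ge 2^{m+1}$, so $\sd_\Q(h_i,h_j)>\gamma$ for $i\neq j$. We lower bound the worst case over $h\in\H$ by the average over a uniformly random target $h\in P$. For any learner $\B$ this gives
\[
\eps_\B(\Q,m)\ \ge\ \expects{h\sim P}{\expects{X\sim\Q^m}{L(\B(X,h|X),\Q_h)}},
\]
where $X$ is the unlabeled sample and $h|X$ its labels. Randomization of $\B$ can be folded in by conditioning on its internal coins, so it suffices to prove the bound for deterministic $\B$ (and then average).

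Condition on the unlabeled sample $X=(x_1,\dots,x_m)$. The learner's output depends only on the labeling $h|X\in\{0,1\}^m$, so $\B$ induces at most $2^m$ distinct predictors $g_1,\dots,g_k$ with $k\le 2^m$. Each $h\in P$ is mapped to $g_{\sigma(h)}$, where $\sigma(h)$ indexes the labeling $h|X$.

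The key step is a triangle-inequality argument on $\sd_\Q$. For any fixed predictor $g$, at most one $h\in P$ can satisfy $\sd_\Q(g,h)\le\gamma/2$. Indeed, two such hypotheses would give $\sd_\Q(h,h')\le\gamma$, contradicting the packing. Hence among the hypotheses mapped to a given $g_j$, at most one has error $\le \gamma/2$, and all others have error $>\gamma/2$. Across all $k\le 2^m$ predictors, at most $2^m$ members of $P$ get error $\le\gamma/2$. Since $N\ge 2^{m+1}$, at least $N-2^m\ge N/2$ members get error $>\gamma/2$. Therefore
\[
\expects{h\sim P}{L(\B(X,h|X),\Q_h)}\ \ge\ \frac{1}{2}\cdot\frac{\gamma}{2}\ =\ \frac{\gamma}{4}
\]
for every $X$. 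Averaging over $X\sim\Q^m$ (and over $\B$'s coins) gives $\eps_\B(\Q,m)\ge\gamma/4$.

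The only delicate point is the handling of randomization. Since $\eps_\B$ is defined with the expectation over the sample and, implicitly, over the coins, fixing the coins $r$ makes $\B(\cdot;r)$ deterministic, and the bound above holds pointwise in $r$. Exchanging the average over $h$ with the expectations over $X$ and $r$ is legitimate because all quantities are bounded and nonnegative. The max-over-$h$ versus average-over-$h$ step is immediate. Measurability is handled in the paper's hands-off manner.
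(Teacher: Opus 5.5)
Your proof is correct and follows essentially the same argument as the paper: place a uniform prior on $2^{m+1}$ pairwise $\gamma$-separated hypotheses and condition on the unlabeled sample (and the learner's coins), so there are at most $2^m$ distinct outputs. By the triangle inequality each output is within $\gamma/2$ of at most one packing element, so at least half the targets incur error above $\gamma/2$, giving average error at least $\gamma/4$ and hence some target with error at least $\gamma/4$.
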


\begin{proof}
Fix a learner $\B$. Choose $K=2^{m+1}$ hypotheses $h_1,\ldots,h_K$ that are pairwise more than $\gamma$ apart under $\Q$, i.e., $\forall i,j \in [K], \sd_{Q}(h_i,h_j) > \gamma$.
Let the target hypothesis be chosen uniformly at random from $[K]$. For a multi-set $X=(x_i)_{i=1}^m$, we also denote by $X_J=(x_i,h_J(x_i))_{i=1}^m$ the mulit-set that is obtained by labeling $X$ with $h_J$. Moreover, we will abuse the notation and write $Q_J$ to denote the joint distribution of $(x,h_J(x))$ with $x\sim Q$.

We can write the expected error of the learner $\B$ as 
\[
\expected{J\sim [K]}\expects{X \sim Q^m}{L(\B(X_{J}),Q_{J})} = \expected{X \sim Q^m}\expects{J\sim [K]}{\sd_{Q}(\B(X_{J}),h_J)}
\]

Fix a set $X$. Since $X$ has size $m$, there are at most $R\leq 2^m$ many distinct labelings that $h_1,\ldots,h_K$ can induce on $X$. Let $(A_i)_{i=1}^R$ denote the equivalence classes induced by the labelings. Observe that for any $A_i$ the output of the learner is fixed and can be within distance of $\gamma/2$ of at most one hypotheses in $A_i$. Otherwise, by the triangle inequality there would exists two hypotheses $h,h'\in A_i$ with $\sd_{Q}(h,h')\leq \gamma$, contradicting the assumption that $h_1,\ldots,h_k$ are pairwise $\gamma$-apart. Hence, 

\[
\expects{J\sim [K]}{\sd_{Q}(\B(X_{J}),h_J)} > \frac{1}{K} \sum_{i \in R} (|A_i| -1)\gamma/2 = \frac{\gamma}{2K} \sum_{i \in R} (|A_i| -1) =\frac{\gamma(K-R)}{2K} \geq \frac{\gamma}{4}.
\]

This proves that for any (randomized) learner $\B$ we have that $\expected{J\sim [K]}\expects{X \sim Q^m}{L(\B(X_{J}),Q_{J})} \geq \gamma/4$. Therefore, for any (randomized) $\B$ there exists $h_{i^*}$ with $\expects{X \sim Q^m}{L(\B(X_{i^*}),Q_{i^*})} \geq \gamma/4$, completing the proof. 
\end{proof}

We have so far argued that from the certifiable error rate $\epsilon(\D,m)$ we can conclude an upper bound on the error of $\A$ on $\D_S$ when given $m$ samples, on average over random $S\sim \D^M$ of size $M$. We then proved that bound on the error of $\A$ on $\D_S$ can be used to control the packing number of the class with respect to $\D_S$. We now use a double-sampling argument to show that the bound on the packing number with respect to $\D_S$ can be used to bound the error of a consistent hypothesis with $M/2$ samples. The following lemma formalizes this claim and its proof can be found in Appendix~\ref{app:pf_erm_smart}.

\begin{lemma}\label{lemma:random-split}
Let $\gamma\in(0,1)$, and $n$ be any integer with $n\gamma\geq 8\ln2$.  For every integer $K\ge1$,
\[
\probs{S\sim\D^n}{\{\exists h \in \H: L(h,S) = 0 ,\, L(h,\D) \geq \gamma \}}
\leq
2\probs{Z\sim\D^{2n}}{
\packnum_{\gamma/16}(\H,\D_Z)\ge K
}
+
2K\exp\!\left(-\frac{n\gamma}{96}\right).
\]
\end{lemma}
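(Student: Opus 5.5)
We follow the classical double-sampling (ghost sample) argument, replacing the growth-function union bound with a union bound over a small cover of $\H$ on the double sample. Fix the target $h^*$. Let $B$ be the event that some $h\in\H$ has $L(h,S)=0$ and $L(h,\D)\ge\gamma$.

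\emph{Step 1 (symmetrization).} Draw an independent ghost sample $S'\sim\D^n$ and let $B'$ be the event that some $h\in\H$ has $L(h,S)=0$ and $L(h,S')\ge\gamma/2$. Conditioned on $S\in B$, fix a witness $h$ (chosen measurably). Then $n\,L(h,S')$ is Binomial$(n,p)$ with $p\ge\gamma$. A multiplicative Chernoff bound gives $\prob{L(h,S')<\gamma/2}\le\exp(-n\gamma/8)\le 1/2$, where the last step uses $n\gamma\ge 8\ln 2$. Hence $\prob{B}\le 2\prob{B'}$. This accounts for the factor $2$ in both terms.

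\emph{Step 2 (random swap).} Let $Z=S\cup S'$ have size $2n$. Because $\D^{2n}$ is exchangeable, $(S,S')$ has the same law as $Z\sim\D^{2n}$ split by a uniformly random partition into two halves $Z_1,Z_2$. We therefore bound
\[
\probs{Z}{\packnum_{\gamma/16}(\H,\D_Z)\ge K}+\expects{Z}{\id[\packnum_{\gamma/16}(\H,\D_Z)<K]\,\prob{B'\mid Z}}.
\]
Now fix $Z$ with packing number less than $K$. Take a maximal $\gamma/16$-packing $\K\subseteq\H$. By maximality it is a $\gamma/16$-cover of $\H$ under $\D_Z$, and $|\K|<K$. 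Each $g\in\K$ has a fixed error set $E_g\subseteq Z$, namely the points where $g\ne h^*$, and we write $k_g=|E_g|$.

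\emph{Step 3 (transfer to the cover).} Suppose $h$ witnesses $B'$, so $h$ makes no errors on $Z_1$ and at least $\gamma n/2$ errors on $Z_2$. Let $g\in\K$ satisfy $\sd_{\D_Z}(h,g)\le\gamma/16$, so $h$ and $g$ disagree on at most $2n\gamma/16=n\gamma/8$ points of $Z$. Then $g$ makes at most $n\gamma/8$ errors on $Z_1$ and at least $n\gamma/2-n\gamma/8=3n\gamma/8$ errors on $Z_2$. In particular $k_g\ge 3n\gamma/8$, and the split of $E_g$ between the halves is lopsided: the $Z_1$ share is at most $k_g/4$ of the $Z_2$ share, hence at most about $k_g/5$.

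Under the random partition, the number of points of $E_g$ landing in $Z_1$ is hypergeometric with mean $k_g/2$. Hoeffding's bound, which also holds for sampling without replacement, gives a tail of order $\exp(-c\,k_g)\le\exp(-c'\,n\gamma)$ for this deviation. A union bound over $\K$ then gives $\prob{B'\mid Z}\le K\exp(-n\gamma/96)$. Combined with the factor $2$ from Step 1, this yields the stated bound.

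\emph{Main obstacle.} The delicate step is the constant bookkeeping that produces exactly $96$. The concentration must be stated relative to $k_g$ rather than $n$, which calls for a multiplicative (Chernoff-type) hypergeometric tail for the event that $Z_1$ receives at most a $1/5$ fraction of a set of size $k_g\ge 3n\gamma/8$. I would first try the additive Hoeffding bound $\exp(-2(0.3)^2k_g)$, which gives roughly $\exp(-n\gamma/15)$. If that is too lossy in the regime where $k_g$ is large, I would switch to the multiplicative bound $\exp(-\delta^2\mu/2)$ with $\mu=k_g/2$, which comfortably beats $96$. The measurability of the choice of witness is handled in the hands-off manner the paper announces.
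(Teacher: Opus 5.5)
Your proposal is correct and is essentially the paper's argument. The shared steps are: ghost-sample symmetrization, with the $n\gamma\ge 8\ln 2$ Chernoff step giving the factor $2$; a maximal $\gamma/16$-packing under $\D_Z$ used as a cover; transfer to a cover element with at most $n\gamma/8$ errors on one half and at least $3n\gamma/8$ on the other; and a hypergeometric tail plus a union bound. For the tail, the paper uses the multiplicative Chernoff bound with $\delta=1/3$ on a mean of at least $3n\gamma/16$, which gives exactly $\exp(-n\gamma/96)$.

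One small slip needs fixing. Those error counts only force the first-half share of $E_g$ to be at most $k_g/4$, not $k_g/5$. As written, you would be bounding an event that need not contain the bad event. With $1/4$, Hoeffding applied to the $k_g$ without-replacement draws gives $\exp(-k_g/8)\le\exp(-3n\gamma/64)\le\exp(-n\gamma/96)$, so the conclusion stands.
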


We are now ready to finish the proof. For any consistent learner $\A_{\cons}$, we can write its expected error when given samples of size $n$ as

\[
\epsilon_{\A_{\cons}}(\D,n) = \expects{S\sim \D^n}{L(\A_{\cons}(S),\D)} \leq \int_{0}^{1}\probs{S\sim \D^n}{\exists h \in \H: L(h,S) = 0 ,\, L(h,\D) \geq \gamma }d\gamma.
\]

Now let $K = 2^{m+1}$ and $\tau = \frac{96\ln 2K}{n} = \frac{96 (m+2)\ln 2}{n}$. We can verify that $n\tau \geq 8\ln 2$. Therefore, applying Lemma~\ref{lemma:random-split}, we get that 
\begin{equation}\label{eq:error_consistent1}
\begin{aligned}
    \epsilon_{\A_{\cons}}(\D,n) &\leq \tau +  \int_{\tau}^{1}\probs{S\sim \D^n}{\exists h \in \H: L(h,S) = 0 ,\, L(h,\D) \geq \gamma }d\gamma\\
    & \leq \tau + 2\int_{\tau}^1 \probs{Z\sim\D^{2n}}{
\packnum_{\gamma/16}(\H,\D_Z)\geq 2^{m+1}}\,d\gamma + 2\int_{\tau}^1 2^{m+1}\exp\!\left(-\frac{n\gamma}{96}\right)\,d\gamma
\end{aligned}
\end{equation}
From Lemma~\ref{lemma:packing-error}, we know that for any $\gamma\geq 0$, we have 
\[\probs{Z\sim\D^{2n}}{
\packnum_{\gamma/16}(\H,\D_Z)\geq 2^{m+1}} \leq \probs{Z\sim\D^{2n}}{\epsilon_{\A}(\D_Z,m)\geq \gamma/64}.
\]
Therefore,
\[
\begin{aligned}
2\int_{\tau}^1 \probs{Z\sim\D^{2n}}{
\packnum_{\gamma/16}(\H,\D_Z)\geq 2^{m+1}} &\leq 2\int_{0}^1\probs{Z\sim\D^{2n}}{\epsilon_{\A}(\D_Z,m)\geq \gamma/64}\\
&= 128\int_{0}^{1/64}\probs{Z\sim\D^{2n}}{\epsilon_{\A}(\D_Z,m)\geq \beta}d\beta\\
& \leq 128 \expects{Z\sim \D^{2n}}{\epsilon_{\A}(\D_Z,m)}.
\end{aligned}
\]
Moreover, 
\[ 2^{m+2}\int_\tau^1\exp\left(-\frac{n\gamma}{96}\right)\,d\gamma
\leq
\frac{96\cdot 2^{m+2}}{n}\exp\left(-\frac{n\tau}{96}\right)
=
\frac{96}{n}.
\]
Combining the above with Equation~\eqref{eq:error_consistent1}, we get
\begin{equation}\label{eq:error_consistent2}
\epsilon_{\A_{\cons}}(\D,n) \leq 128 \expects{Z\sim \D^{2n}}{\epsilon_{\A}(\D_Z,m)} + \frac{96}{n} + \frac{96 (m+2)\ln 2}{n}.
\end{equation}
We now apply Lemma~\ref{lemma:error_on_unif} with $M = 2n$, to conclude
\[
\expects{Z\sim\D^{2n{}}}{\eps_\A(\D_S,m)}
\leq \Gamma_{2n,m}\epsilon(\D,m)+1-\Gamma_{2n,m} \leq \epsilon(\D,m) + \frac{m(m-1)}{4n},
\]
where we used the fact that $1-\Gamma_{2n,m} \leq m(m-1)/4n$. Therefore, we can continue Equation~\eqref{eq:error_consistent2} and write
\[
\epsilon_{\A_{\cons}}(\D,n) \leq 128 \epsilon(\D,m) + \frac{32m(m-1)}{n}+\frac{96 + 96 (m+2)\ln 2}{n} \leq 128 \epsilon(\D,m) + \frac{256(m+1)^2}{n},
\]
where the last inequality is due to $32m(m-1) + 96 + 96(m+2)\ln 2 \leq 256(m+1)^2$.
Letting $n \geq \frac{256(m+1)^2}{\eta}$, we get that
\[
\epsilon_{\A_{\cons}}(\D,n) \leq 128 \epsilon(\D,m) + \eta,
\]
concluding the proof. \hfill \qed

\section{Relatively Smart Semi-Supervised Learners}\label{sec:ss_smart}
We know that the ERM and OIG learners are relatively smart with $\sigma(m,\eta) = O_{\eta}(m^2)$ (Theorem~\ref{thm:erm_smart} and Thereom~3.2 of \citet{pmlr-v336-dughmi26a}) and that for every pair of constants $\alpha$ and $\beta$ and every function $\sigma(m,\eta) = O_\eta (m^{2 - \beta})$, no \emph{fully supervised} learner is relatively $(\alpha,\sigma)$-smart (Theorem~4.1 of \citet{pmlr-v336-dughmi26a}). This shows that there is no hope in significantly improving the sample complexity needed for relatively smart learning beyond $O(m^2)$, even with any complicated learner. We, therefore, move beyond the supervised setting and ask whether access to unlabeled data can reduce the number of labeled samples required for relatively smart learning.

\begin{center}
\begin{minipage}{0.9\linewidth}
\emph{Are there semi-supervised learners that are relatively smart with $\sigma(m,\eta)$ significantly better than $O_{\eta}(m^2)$? If so, can such improvements be achieved by tractable semi-supervised learners?}
\end{minipage}
\end{center}

We answer the first question in the affirmative. We show that there exists a semi-supervised relatively smart learner that uses $O_{\eta}(m^2)$ unlabeled samples while achieving $\sigma(m,\eta) = m$. However, this learner is highly intractable, and there is no general recipe for implementing it efficiently. Indeed, we obtain a strong negative answer for the second question. We prove that, even if the marginal distribution is given, for every pair of constants $\alpha$ and $\beta$ and every function $\sigma(m,\eta) = O_\eta (m^{2 - \beta})$, no semi-supervised learner can be relatively smart  with labeled sample complexity $\sigma(m,\eta)$ if it only makes polynomially many (in $m$) calls to an agnostic ERM oracle.

We formally define relatively smart learning for semi-supervised learners.

\begin{definition}\label{def:error_semi-supervised}
  For a semi-supervised learner $\A$, we define its distribution-dependent error rate $\epsilon^{\sss}_\A(\D,u,\ell)$ be the worst-case, over hypotheses $h \in \H$, of its expected error with respect to $\D_h$ when given $u$ i.i.d.~unlabeled samples from $\D$ and $\ell$ i.i.d.~labaled samples from $\D_h$, i.e., \(\epsilon^{\sss}_\A(\D,u,\ell) = \sup_{h \in \H}  \expects{U \sim \D^u, S \sim \D_h^\ell}{L\left(\A(U,S),\D_h\right)}.\) 
  \end{definition}

  \begin{definition}[Relatively Smart Semi-Supervised Learning]
\label{def:ss_relatively_smart}
For functions $\sigma_u,\sigma_{\ell}: \NN \times (0,1)\to \NN$ and constant $\alpha > 0$,  we call a semi-supervised learner $\A$ \emph{relatively $(\alpha,\sigma_u,\sigma_{\ell})$-smart} if \[\epsilon^{\sss}_\A(\D,\sigma_u(m,\eta),\sigma_\ell(m,\eta)) \leq \alpha \epsilon(\D,m) + \eta\] for every certifiable distribution-dependent error rate  $\epsilon$, every distribution  $\D \in \DD$, every sample size $m \in \NN$, and every additive error parameter $\eta \in (0,1)$.
\end{definition}

\subsection{A Relatively Smart Semi-Supervised Learner with No Blow-Up in Labeled Sample Complexity}

We construct a learner that is relatively semi-supervised smart with no blow-up in labeled sample complexity, i.e., with $\sigma_\ell(m,\eta) = m$, \emph{independent of $\eta$.} The construction is inspired by the idea that certifiable error rates cannot be much smaller than the error of the learner that is witnessing it on uniform distributions over a random sample. This idea  appears in Lemma~\ref{lemma:error_on_unif} and is also exploited by \citet{pmlr-v336-dughmi26a} to prove that OIG is relatively smart. 

In more detail, we show that a certifiable error rate must be close to the error of the optimal distribution-fixed learner for the uniform distribution over a random unlabled sample of size $O(m^2)$, when allowed $O(m)$ labeled samples. This shows that for functions $\sigma_u(m,\eta) = O_{\eta}(m^2)$, $\sigma_\ell(m,\eta) = O(m)$, the semi-supervised learner $\A_{\sss}$ defined as follows is relatively $\sigma_{u},\sigma(\ell)$-smart: given unlabeled data $U$, labeled data $S$, and test point $x$, $\A_{\sss}$ computes an optimal learner for the uniform distribution over the pooled sample $T = U \cup \dom(S) \cup x$, runs this learner on $S$ and outputs its prediction on $x$.
Note that $\H|T$ is finite and the ``optimal learner'' is well-defined. This learner is described in Algorithm~\ref{alg:ss-learner} and its guarantee is summarized in the following theorem.

\begin{algobox}{Learner $\A_{\sss}(U,S)(x)$ trained on unlabeled and labeled samples $U,S$ given test point $x$:}\label{alg:ss-learner}
    \begin{enumerate}[itemsep=0pt]
    \item Let $T = U \cup \dom(S) \cup x$. Let $\A^*_{T}$ be an optimal \emph{supervised} learner for the uniform distribution on $T$ when given $\ell:=|S|$ labeled samples, i.e.,
    \( \A^*_{T}\in \arg\min_{\A} \epsilon_{\A}(\D_T,\ell)
    \). 

    \item Return $\A^*_T(S)(x)$.
\end{enumerate}
\end{algobox}

\begin{theorem}\label{thm:ss-smart-learner}
    For every domain and hypothesis class, the semi-supervised learner $\A_{\sss}$ defined in Algorithm~\ref{alg:ss-learner} is relatively $(1,\frac{2m(m+1)}{\eta},m)$-smart in the distribution-free setting.
\end{theorem}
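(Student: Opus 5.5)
Fix a certifiable error rate $\epsilon$, a marginal $\D$, a sample size $m$ and $\eta\in(0,1)$. Let $\A=\A_\D$ and $\C$ be the witnessing pair, set $u=\lceil 2m(m+1)/\eta\rceil$ and $M=u+m+1$, and fix a target $h\in\H$. The plan is to use exchangeability of the pooled sample to reduce the error of $\A_{\sss}$ to an average, over a random pool $T\sim\D^M$, of the optimal $\D_T$-fixed error. That quantity is then dominated by $\eps_\A(\D_T,m)$, which Lemma~\ref{lemma:error_on_unif} bounds by the certified error.

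\textbf{Step 1 (exchangeability).} Draw $U\sim\D^u$, $S\sim\D_h^m$ and a test point $x\sim\D$. The pooled multiset $T=U\cup\dom(S)\cup x$ is then distributed as $\D^M$. Conditioned on $T$ as a multiset, the positions of the $m$ labeled points and of the test point form a uniformly random ordered selection of $m+1$ distinct indices of $T$, i.e.\ sampling \emph{without replacement}. The expected error of $\A_{\sss}$ is therefore
\[
\expects{T\sim\D^M}{\probs{(X,x)\simnr \D_T^{m+1}}{\A^*_T(X_h)(x)\neq h(x)}},
\]
where $X_h$ denotes $X$ labeled by $h$. This step has one subtlety: $\A^*_T$ is a function of $T$ only, not of which points are labeled, since $\dom(S)\cup x\cup U=T$ regardless. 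Hence, conditioned on $T$, the learner is a fixed $\D_T$-fixed learner.

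\textbf{Step 2 (with vs.\ without replacement).} The probability that $m+1$ with-replacement draws from $\D_T$ contain no repeated index is $\Gamma_{M,m+1}$, and conditioned on no repeats the draws are distributed as without-replacement draws. Since errors lie in $[0,1]$, the without-replacement error is at most
\[
\frac{1}{\Gamma_{M,m+1}}\,\epsilon_{\A^*_T}(\D_T,m)\quad(\text{worst case over } h).
\]
By optimality of $\A^*_T$ this is at most $\frac{1}{\Gamma_{M,m+1}}\epsilon_\A(\D_T,m)$, because $\A$ is one admissible learner. I would rather avoid dividing by $\Gamma$, since the target constant is $1$. So I would bound the difference additively: the with- and without-replacement laws of $(X,x)$ are within total variation $1-\Gamma_{M,m+1}\le m(m+1)/(2M)$. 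This gives
\[
\text{error}\le \expects{T}{\epsilon_\A(\D_T,m)}+\frac{m(m+1)}{2M}.
\]

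\textbf{Step 3 (apply Lemma~\ref{lemma:error_on_unif}).} Lemma~\ref{lemma:error_on_unif} with pool size $M$ gives
\[
\expects{T}{\epsilon_\A(\D_T,m)}\le\epsilon(\D,m)+1-\Gamma_{M,m}\le \epsilon(\D,m)+\frac{m(m-1)}{2M}.
\]
The two additive terms sum to at most $m^2/M\le \eta/2<\eta$ once $u\ge 2m(m+1)/\eta$. This yields the $(1,\frac{2m(m+1)}{\eta},m)$ guarantee, and since $h$ was arbitrary the bound holds for the supremum over $h$ as well.

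The main obstacle I expect is Step 1: making rigorous that the random split of $T$ into $U$, labeled part and test point is uniform and independent of the choice of $\A^*_T$. Multiset ties need care; I would handle them by working with indexed samples and letting $\A^*_T$ be chosen as a measurable function of the multiset. A minor issue is that the paper's Lemma~\ref{lemma:error_on_unif} is stated for i.i.d.\ draws from $\D_T$ of size exactly $m$, while the test point adds one more draw. I would handle this with the single total-variation comparison on $m+1$ draws in Step 2.
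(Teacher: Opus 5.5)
Your proposal is correct and follows the paper's proof essentially step for step: exchangeability of the pooled sample turns the error into a without-replacement leave-one-out error on $\D_T$, this is compared with i.i.d.\ draws via $\Gamma_{M,m+1}$, optimality of $\A^*_T$ is used against the witnessing learner $\A$, and Lemma~\ref{lemma:error_on_unif} finishes the argument. The only difference is that you absorb the with/without-replacement gap additively through total variation, with slack $m^2/M<\eta/2$, whereas the paper divides by $\Gamma_{M,m+1}\ge 1-\eta/4$. The paper still keeps the constant $1$ because $(\epsilon+\eta/4)/(1-\eta/4)\le\epsilon+\eta$, so your concern about the multiplicative factor was unnecessary, but your variant is equally valid and slightly cleaner.
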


\begin{proof}
    Let $\epsilon(.,.)$ be a certifiable error rate. Fix a distribution $\D$ and sample size $m$. For a finite unlabeled multiset $U \in \X^*$, we will use the shorthand notation $U_h$ to refer to $h|U$. We write the expected error of $\A_{\sss}$ on $\D$ when given $u$ unlabeled and $\ell$ labeled samples as 
    \begin{equation}\label{eq:error_ss_wor}
    \begin{aligned}
   \epsilon^{\sss}_{\A_{\sss}}(\D,u,\ell) &= \sup_{h \in \H}  \expected{U \sim \D^u} \expects{S \sim \D_h^\ell}{L\left(\A_{\sss}(U,S),\D_h\right)} \\
   &= \sup_{h \in \H}  \expected{U \sim \D^u}\expected{ S \sim \D_h^\ell}\expects{x \sim \D}{\id\left\{\A_{\sss}(U,S)(x) \neq h(x)\right\}}\\
   &=\sup_{h\in\H} \expected{U \sim \D^u}\expected{ S \sim \D_h^\ell}\expects{x \sim \D}{ \id\left\{ \A^{*}_{U \cup \dom(S) \cup x}(S)(x)\neq h(x) \right\} }\\
   &=\sup_{h\in\H} \expected{T\sim \D^M} \expected{ S \simnr \D_T^{\ell+1}}\expects{i\sim [\ell+1]}{ \id\left\{ \A^{*}_T(S_h^{(-i)})(x_i)\neq h(x_i) \right\} },
    \end{aligned}
    \end{equation}
where $x_i$ denotes the instance in $S$ at index $i$ and $S_h^{(-i)}$ is the set $S$ without $x_i$. Moreover, $\simnr$ denotes sampling without replacement.

Let $M = u + \ell +1$. Now observe that the process of sampling $S$ without replacement from $\D_T$ and then choosing $x$ uniformly from $S$, is the same as the sampling $\ell$ i.i.d.~samples from $\D_T$ labeled by $h$ as the training set and an additional i.i.d.~sample from $\D_T$ as the test point, save for the fact that there is no duplicates among $S$ and $x$. Recalling that $\Gamma_{M,\ell+1} = \frac{M(M-1)\ldots (M-\ell)}{M^{\ell+1}}$ is the probability of observing no duplicates when making $\ell+1$ independent draws from a uniform distribution on $M$ points, we get from Equation~\eqref{eq:error_ss_wor} that
 \[
    \begin{aligned}
   \epsilon^{\sss}_{\A_{\sss}}(\D,u,\ell) & \leq \frac{1}{\Gamma_{M,\ell+1}}\sup_{h\in\H} \expected{T\sim \D^M} \expected{S\sim\D_T^\ell}\expects{x\sim \D_T}{ \id\left\{ \A^{*}_T(S_h)(x)\neq h(x) \right\} }\\
   & \leq \frac{1}{\Gamma_{M,\ell+1}}\expected{T\sim \D^M} \sup_{h\in\H} \expected{S\sim\D_T^\ell}\expects{x\sim \D_T}{ \id\left\{ \A^{*}_T(S_h)(x)\neq h(x) \right\} }\\
    & = \frac{1}{\Gamma_{M,\ell+1}}\expected{T\sim \D^M}  \epsilon_{\A^{*}_T}(\D_T,\ell).\end{aligned}\]

Since $\epsilon(.,.)$ is certifiable, by definition, there exists a learner $\A = \A_\D$ and its sound certifier $\C= \C_\A$ such that (i) $\epsilon_{\A}(\D,m) \leq \expects{S \sim \D^m}{\C(S)} \leq \epsilon(\D,m)$ and (ii) for any distribution $\D'$, $\epsilon_{\A}(\D',m) \leq \expects{S \sim {\D'}^m}{\C(S)}$. Letting $\ell = m$, similar to the proof of Lemma~\ref{lemma:error_on_unif}, we can conclude that 
\[
\expects{T\sim\D^M}{\eps_\A(\D_T,m)}
\leq \Gamma_{M,m}\epsilon(\D,m)+1-\Gamma_{M,m}.
\]

We know $\A^*_{T}$ is the optimal (supervised) distribution-fixed learner for $\D_T$ when given $m$ labeled samples. Therefore, dataset by dataset, we get that $\epsilon_{\A^{*}_T}(\D_T,m) \leq \epsilon_{\A}(\D_T,m)$. Hence, we get from above that 
 \[
    \begin{aligned}
   \epsilon^{\sss}_{\A_{\sss}}(\D,u,m) & \leq \frac{1}{\Gamma_{M,m+1}}\expects{T\sim \D^M} { \epsilon_{\A^{*}_T}(\D_T,m)} \leq \frac{1}{\Gamma_{M,m+1}}\expects{T\sim \D^M} {\epsilon_{\A}(\D_T,m)} \leq \frac{\Gamma_{M,m}}{\Gamma_{M,m+1}}\epsilon(\D,m) + \frac{1 -\Gamma_{M,m}}{\Gamma_{M,m+1}}.\end{aligned}\]

  Let $u = \frac{2m(m+1)}{\eta}$ and thus $M = \frac{2m(m+1)}{\eta} + m +1$. We know that $\Gamma_{M,m+1} \geq 1-\frac{m(m+1)}{2M} \geq 1-\frac{m(m+1)}{2u} \geq 1-\eta/4$.
   Therefore, we get that
   \[
   \epsilon^{\sss}_{\A_{\sss}}(\D,u,m) \leq \frac{\epsilon(\D,m)+\eta/4}{1-\eta/4} \leq \epsilon(\D,m) + \eta,
   \]
   where the last inequality is due to $\eta\in (0,1)$. This completes the proof.
\end{proof}
\subsection{An Inclusion Graph Relatively $(1,O(m^2/\eta),m)$-Smart Learner}
In the proof of Theorem~\ref{thm:ss-smart-learner}, we formulated Algorithm~\ref{alg:ss-learner} in terms of an optimal distribution-fixed learner for the uniform distribution $\D_T$, without specifying the form of this learner. We now show that essentially the same guarantee can be achieved by a natural generalization of the OIG learner. The key observation is that the semi-supervised construction naturally induces a \emph{leave-most-out} transductive problem: from a finite set $T$, the learner observes the labels of a small subset of $m$ points and must predict the labels of the remaining points. We define an Inclusion Graph (IG) learner for this setting and show that it is optimal for the leave-most-out problem, in the same sense that the OIG learner is optimal for the leave-one-out transductive problem.

The inclusion graph is defined on the set $T$ of size $M = O(m^2/\eta)$ unlabeled data. For any subset of size $m$, there is and edge between all the nodes in $\H|T$  that are consistent on the subset. Any learner for the transductive leave-most-out setting should orient the edges towards vectors in $\{0,1\}^{M-m}$. The out-degree of a node is then the average over all its edges of the hamming distance between the node and the orientation of the edge. It is easy to observe that leave-most-out error of every node equals it out-degree. An $\ig_{T}$ learner for the set $T$ is then any (randomized) orientation that minimizes the maximum out-degree. We can easily see that the IG learner achieves the optimal leave-most-out error. This is reminiscent of the fact that the OIG learner is optimal for the leave-one-out transductive setting. 

The learner $\A_{\sss-\ig}$ is defined as follows: given unlabeled data $U$, labeled data $S$, and test point $x$, predict according to the optimal orientation of $\ig_{T}$ for $T = U \cup \dom(S) \cup x$. We have 
\[\begin{aligned}
     \epsilon^{\sss}_{\A_{\sss-\ig}}(\D,u,\ell) &\leq \expected{T\sim \D^M} \sup_{h\in\H} \expected{S\simnr\D_T^{\ell+1}}\expects{i\sim [\ell+1]}{ \id\left\{ \ig_T(S_h^{(-i)})(x_i)\neq h(x_i) \right\}} \\
     &\leq \expected{T\sim \D^M} \sup_{h\in\H} \expected{S\simnr\D_T^{\ell+1}}\expects{i\sim [\ell+1]}{ \id\left\{ \A(S_h^{(-i)})(x_i)\neq h(x_i) \right\}}\\
     &\leq \frac{1}{\Gamma_{M,\ell+1}}\expects{T\sim \D^M} {\epsilon_{\A}(\D_T,\ell)},
\end{aligned}
\]
where the  second inequality is due to the optimality of the $\ig$ in the leave-most-out setting.
We can then similar to Theorem~\ref{thm:ss-smart-learner} continue to prove that $ \epsilon^{\sss}_{\A_{\sss-\ig}}(\D,O(m^2/\eta),m) \leq \epsilon(\D,m) +\eta$, concluding $\A_{\sss-\ig}$ is relatively $(1,O(m^2/\eta),m)$-smart.

\section{An Impossibility Result for Tractable Semi-Supervised Learners}\label{sec:oracle_impossibility}

The preceding subsection shows that access to unlabeled data can eliminate the quadratic blow-up in labeled sample complexity: a relatively smart semi-supervised learner can use $O_\eta(m^2)$ unlabeled samples while requiring only $m$ labeled samples. We now show that this comes at the cost of the tractability of the learner.

We formalize tractability through \emph{agnostic ERM oracle} access to the hypothesis class. An agnostic ERM oracle for a hypothesis class $\H$ takes as input a finite labeled multi-set and returns a hypothesis in $\H$ minimizing its error on the set.

\begin{definition}[Agnostic ERM Oracle]
    For a hypothesis class $\H \subseteq \{0,1\}^{\X}$, an Agnostic ERM Oracle $\O_{\H}$ for $\H$ is a function that given any finite set $T \in (\X\times\{0,1\})^*$, outputs a hypothesis in $\H$ with the smallest error on $T$, i.e., $\O_{\H}(T) \in \arg\min_{h\in \H} L(h,T)$.
\end{definition}

We only consider semi-supervised learners who do not have the knowledge of $\H$ and their only access to $\H$ is through calls to such an oracle. Formally, a (randomized) algorithm $\A$ is said to be an oracle learner if it takes as input an unlabeled set $U$ (or the marginal distribution $\D$ if granted), a labeled set $S$ of size $m$, and a test point $x$, adaptively queries an agnostic ERM oracle, then outputs a label for $x$. We  say that $\A$ is an efficient oracle learner if the number of oracle calls it makes is bounded by a polynomial function of $m$.

We show that no efficient oracle learner can be relatively smart with a substantially sub-quadratic $\sigma_\ell$.

\begin{theorem}\label{thm:oracle_impossibility}
    There exists a countable domain such that for every pair of constants $\alpha,\beta$, any arbitrary $\sigma_u$, and every function $\sigma_{\ell}(m,\eta) = O_{\eta}(m^{2-\beta})$ the following holds: there is no (semi-supervised) oracle efficient learner $\A$ that is relatively $(\alpha,\sigma_u,\sigma_{\ell})$-smart for every hypothesis class. Particularly, this holds even if the learner has full knowledge of the marginal distribution.
\end{theorem}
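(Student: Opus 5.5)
We follow the outline sketched in the introduction and build on the lopsided classes from the lower bound of \citet{pmlr-v336-dughmi26a}.

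\textbf{The construction.} Fix $m$ and a domain $[n]$ with $n = m^{2-\beta/2}$ (more precisely, any $n$ with $\sigma_\ell(m,\eta)=o(n)$ and $n = o(m^2)$). Let $\D$ be uniform on $[n]$. Let $\H_0$ contain every hypothesis that assigns a minority label to at most $k=o(n)$ points, with $k$ chosen so that $\H_0$ is still rich enough for ties to occur. For $f\in\{0,1\}^n$ set $\H_f=\H_0\cup\{f\}$. Embedding the different $n$ into disjoint blocks of a single countable domain handles all $m$ at once.

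\textbf{Upper bound on the certifiable rate.} For $\D$ on block $n$, we use the $\D$-fixed learner that chooses among $f$, $\mathbf{0}$ and $\mathbf{1}$ by a majority or consistency test on the sample. We pair it with a uniformity-testing certifier: on $m\gtrsim\sqrt n$ samples, it outputs $O(k/n)$ when the collision statistics look uniform on block $n$, and $1$ otherwise. Soundness for arbitrary $\D'$ is argued as in the lower-bound construction of \citet{pmlr-v336-dughmi26a}. If $\D'$ is far from uniform in the relevant sense, the tester rejects with good probability. If $\D'$ passes the test, it has no heavy mass on few points, so every $h\in\H_0$ is $O(k/n)$-close to a constant under $\D'$, and $f$ is exactly correct. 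Hence $\epsilon(\D,m)=o(1)$ is certifiable, and we may take $\eta$ to be a small constant.

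\textbf{Lower bound for oracle learners.} We fix a tie-breaking oracle for $\H_f$ that returns $f$ only when $f$ strictly beats every member of $\H_0$. Otherwise it returns a canonical minimizer in $\H_0$, and this minimizer depends only on the query, not on $f$.

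Single query: for a fixed labeled multiset $Q$, let $q=|Q|$. Some $g\in\H_0$ matches the majority label and also matches $k$ of the minority points, so $f$ can strictly win only if it beats this $g$. When $q\le k$, no $f$ wins, since $\H_0$ can fit $Q$ exactly. When $q>k$, $f$ must agree with $Q$ on more points than $g$ does, which forces many prescribed coordinates of $f$. A Chernoff or counting bound then gives probability $2^{-\Omega(k)}$ over uniform $f$, even after conditioning on the $\le\sigma_\ell$ coordinates of $f$ revealed by the labeled sample. This step is the main obstacle. It must hold uniformly over queries whose points overlap with the revealed sample points, and over multiset weights. I would handle it by noting that, after fixing the revealed coordinates, $f$ restricted to the remaining $n-\sigma_\ell$ coordinates is still uniform, and then bounding the probability that $f$ gains the required advantage on those coordinates.

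Adaptivity: consider the first round $t$ at which $f$ is returned. Before round $t$, every answer is a function of the query alone. So, conditioned on the labeled sample, the learner's internal coins, and the revealed coordinates, the query at round $t$ is fixed. A union bound over $\mathrm{poly}(m)$ rounds shows that $f$ is ever returned with probability $\mathrm{poly}(m)\,2^{-\Omega(k)}=o(1)$, taking $k=\Omega(m^{c})$ for some $c>0$.

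On the complementary event, the learner's output on a fresh test point $x\notin\dom(S)$ is independent of the uniform bit $f(x)$. The probability that $x\in\dom(S)$ is $\sigma_\ell/n=o(1)$. Therefore the expected error, averaged over $f$, is at least $1/2-o(1)$, so some fixed $f^*$ gives $\epsilon^{\sss}_\A\ge 1/2-o(1)$. This exceeds $\alpha\,o(1)+\eta$ for small $\eta$.

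Unlabeled data and full knowledge of $\D$ do not help, because both are independent of $f$.
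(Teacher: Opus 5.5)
\textbf{Verdict: genuine gap.} Your overall architecture is the paper's: a lopsided class plus a random $f$, a uniformity-testing certifier, an oracle that prefers the lopsided class, the first-reveal-round union bound, and error $1/2-o(1)$ on unseen points. The gap is in the step you flag as the main obstacle. It is misstated, and the parameter choices that rest on it are wrong. The fixed-query probability is not $2^{-\Omega(k)}$. Take the query that labels every point of the block once, half with $1$ and half with $0$. Each member of $\H_0$ errs on at least $n/2-k$ of these points. So $f$ strictly wins whenever it disagrees with the query on fewer than $n/2-k$ points, which has probability $e^{-\Theta(k^2/n)}$. The right bound is $\exp(-(k-s)^2/(2(n-s)))$, as in Lemma~\ref{lemma:fixed_query_reveal}. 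Hence $k=\Omega(m^{c})$ does not suffice; you need $k^2/n\gg\log m$. For $k=O(\sqrt n)$ the construction actually breaks, since a few queries with independent random balanced labelings each reveal $f$ with constant probability. You also need $\sigma_\ell\ll k$, not merely $\sigma_\ell=o(n)$. If the labeled sample contains more than $k$ points of each label, resubmitting it as a query makes $f$ the unique empirical risk minimizer, so the bound ``even after conditioning on the revealed coordinates'' fails outright. Separately, certifying a level $\xi$ by uniformity testing costs about $\sqrt n/\xi^2$ samples, and the certified value must be of order at least $k/n+\xi$; so $m\approx\sqrt n$ is not enough. With $n=m^{2-\beta/2}$, a choice such as $k=n^{1-\beta/20}$ satisfies all of these constraints, but your proposal neither states nor enforces them. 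These constraints are what determine the paper's parameters: $M(n)=\lceil n^{1-\beta}\rceil$, $m(n)\approx n^{1/2+3\beta}$, and the restriction to labeled samples of size at most $M(n)/2$.

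Beyond the rate, comparing $f$ with one best lopsided competitor and applying a Chernoff bound does not work for weighted multiset queries. Suppose one side of the query carries a single very heavy point. Then $f$ beats the best hypothesis of one lopsidedness with probability about $1/2$, and only the competitor of the opposite lopsidedness rules it out. A Hoeffding bound in terms of $\sum_x q_T(x)^2$ is then vacuous.

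The missing idea is to play the two kinds of competitor against each other. On the block, the empirical loss is affine in the labeling: a constant minus $\frac{1}{|T|}\sum_{x}q_T(x)h(x)$, where $q_T(x)$ is the number of $1$-labels minus the number of $0$-labels at $x$. Suppose two labelings $c,c'$ extending $a$ on $S_0$ both strictly beat $\H(n)$ yet agree on at most $M(n)$ points. Let $h_0,h_1$ copy them on the agreement set and be constantly $0$, respectively $1$, on the disagreement set. Then $h_0,h_1\in\H(n)$ and $h_0+h_1=c+c'$ pointwise. So the average score of $h_0,h_1$ equals that of $c,c'$. This is impossible: the former is at most $\max_{h\in\H(n)}\sum_x q_T(x)h(x)$, while the latter strictly exceeds it. Thus the winning set has diameter less than $n-M(n)$ in $\{0,1\}^{\X_n\setminus S_0}$. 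Kleitman's diameter theorem (Lemma~\ref{lemma:cube-diameter}) then gives the bound uniformly over all queries, multiplicities and overlaps with $S_0$. A Chernoff route can be repaired by averaging the two competitors and splitting off the $k$ heaviest coordinates on each side, but that work is not in your proposal. Finally, you must separately discard the exponentially unlikely event $f\in\H_0$, in which the oracle's canonical lopsided answer may itself be $f$.
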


Theorem~\ref{thm:oracle_impossibility} is formally proved in Section~\ref{sec:formal_pf_oracle_impossibility}. We now give the high-level idea of the proof. Our construction builds on the lopsided hypothesis classes used in the proof of the lower bound for OIG and ERM learners in \citet{pmlr-v336-dughmi26a}. On a domain of size $n$, the lopsided class $\H(n)$ is defined as follows:  consider all hypotheses for which at most $n^{1-O(\beta)} = o(n)$ of points have the same minority label and the remaining $n-o(n)$ points have the other majority label. We augment the class $\H(n)$ with a uniformly random hypothesis $f \in \{0,1\}^n$. For a (nearly) uniform distribution over the $n$ points, when the class (and hence the hypothesis $f$) is explicitly given, empirical risk minimization restricted to $f$ and the two constant predictors (all-$0$ and all-$1$) has vanishing error $o(n)/n = o(1)$. 
Moreover, when $m \approx \sqrt{n}$ this error rate is certifiable by almost the same arguments as in \citet{pmlr-v336-dughmi26a}: $m \approx \sqrt{n}$ is sufficiently large for a certifier to identify such a (nearly) uniform distribution. The intuition comes from the Birthday paradox, and to make it formal we rely on uniformity testing. 

A learner with only oracle access to the class, however, must somehow discover the additional hypothesis $f$. To show that this is difficult, we prove that, on average over the choice of $f$, the learner has error $1/2-o(1)$ when $f$ is the labeling function. When the labeled sample size is smaller than the number of minority labels, the sample reveals only a small fraction of the labels of $f$, leaving the remaining labels uniformly random. We consider an agnostic ERM oracle that, whenever a hypothesis from the lopsided class $\H(n)$ is an empirical-risk minimizer, breaks ties in favor of $\H(n)$. For this oracle, Agnostic ERM queries reveal essentially no additional information about $f$. Indeed, for any fixed query, the fraction of hypotheses $f\in\{0,1\}^n$ that have strictly smaller empirical loss than every hypothesis in the lopsided class $\H(n)$ is exponentially small. To see this, we first show that any two such hypotheses $f,f'$ must agree on at least $n^{1-O(\beta)}$ points. Otherwise, consider the two hypotheses that agree with $f,f'$ on their agreement set and are respectively constant $0$ and $1$ on their disagreement set. Both belong to $\H(n)$, and one has loss no larger than $f$ or $f'$, contradicting the assumption that both strictly outperform every hypothesis in $\H(n)$. A combinatorial bound then shows that any family of hypotheses with this much pairwise agreement constitutes only an exponentially small fraction of $\{0,1\}^n$.

The same conclusion continues to hold for adaptively chosen queries. Fix the labeled sample, a test point $x$ not appearing in the sample, and the learner's internal randomness. Consider the sequence of oracle queries and responses that the learner generates as long as the oracle never returns $f$. By construction of the oracle and its tie-breaking rule, every oracle response along such a sequence is determined without using the unseen labels of $f$. Hence, conditioned on the observed labels, the resulting sequence of queries and responses is fixed with respect to the remaining randomness of $f$.
Suppose now that in the actual interaction the oracle eventually returns $f$, and let $t$ be the first round at which this happens. Since $f$ was not returned in any of the previous rounds, the actual interaction up to round $t$ coincides with the sequence described above. In particular, the query at round $t$ is fixed with respect to the unseen labels of $f$. Moreover, except for the exponentially unlikely event that $f\in\H_n$, the oracle can return $f$ at this round only if $f$ has strictly smaller empirical loss on this query than every member of $\H(n)$, by the oracle's tie-breaking rule. We can therefore bound the probability that $f$ is returned in this round by the above bound for the fixed query. A union bound over polynomially many oracle calls shows that the oracle returns $f$ with probability $o(1)$, where the probability is over the uniformly random choice of $f$.

Finally, consider again the interaction described above in which the oracle never returns $f$. The learner's resulting prediction on $x$ is determined independently of the unseen label $f(x)$. Since $x$ does not appear in the labeled sample, $f(x)$ remains a uniformly random bit, and hence this prediction has error exactly $1/2$ over the randomness of $f$. The actual interaction differs from this one only if the oracle returns $f$, which happens with probability $o(1)$. Therefore, the learner's expected error, averaged over the random choice of $f$, is at least $1/2-o(1)$. Consequently, there must exist a fixed hypothesis $f^*\in \{0,1\}^n$ on which the learner has error at least $1/2-o(1)$, thereby ruling out any substantially subquadratic labeled sample complexity for matching the certifiable error rate of $o(1)$.

\subsection{Formal Proof of Theorem~\ref{thm:oracle_impossibility}}\label{sec:formal_pf_oracle_impossibility}
Let $\A$ be any oracle efficient learner. Fix an arbitrary parameter $\beta \in (0,\frac{1}{8})$. Consider the domain $\X \subseteq \bigcup_{n\in \NN} \X_n$ with $\X_n = \{n\} \times [n]$. We refer to $\X_n$ as the $nth$ row of the domain. For every $n\in \NN$, let $\D(n)$ denote the uniform distribution over $\NN$. For every $n\in \NN$, we begin with the class $\H(n)$ constructed by \citet{pmlr-v336-dughmi26a}, where the minority label of every $h\in \H(n)$ has probability at most $n^{-\beta}$ under $\D(n)$. We then augment $\H(n)$ by adding random functions on $\X_n$. Define the functions $M,m:\NN \rightarrow \NN$ and $\xi: \rightarrow (0,1)$ as 
\begin{equation}\label{eq:m(n)xi(n)}
M(n) = \lceil n^{1-\beta}\rceil,
m(n) = \lfloor n^{\frac{1}{2}+3\beta} \rfloor, \,\text{and}\, \xi(n) = M(n)/n \approx n^{-\beta}.
\end{equation}
For $b\in\{0,1\}$ and $n \in \NN$, let 
\[
\begin{aligned}
&\H^{(b)}(n) = \{h \in \{0,1\}^{\X}: \forall x\in \X\setminus \X_{n}, h(x)=1 \,\text{and}\,|\{x\in\X_{n}: h(x)=b\}| \leq M(n)\},
\end{aligned}
\]
and define $\H(n) = \H^{(0)}(n) \cup \H^{(1)}(n)$.  
 Note that the function $M(n)$ is an upper bound on the number of points with minority label of every $h\in \H(n)$ in row $n$, while $h$ is constant on $\X \setminus \X_n$. Moreover, under the distribution $\D(n)$, the function $\xi(n)$ denotes an upper bound on the probability of a minority label for every $h\in\H(n)$.  Now we will define a hypothesis class that is a disjoint union of $\H(n)$ with (at most) one extra labeling for every row $n$. For any $g_n\in\{0,1\}^n$, define $(f_{n,g_n})_{n\in \NN}$ with $f_{n,g_n}(x) = g_n(x)$ for all $x \in \X_n$ and $f_{n,g_n}(x) = 1 $ for all $x\in \X \setminus \X_n$. Let
 $\H(n,g_n)= \H(n) \cup \{f_{n,g_n}\}$ and for any choice of binary labelings $G = (g_n)_{n\in \NN}$ with $g_n\in\{0,1\}^n$, define
 $\HH(G) = \bigcup_{n \in \NN}\H(n,g_n)$.
We show that for every $G$, the error rate $O(\xi(n))$ is certifiable for the class $\HH(G)$ with respect to $\D(n)$ and sample sizes $m\geq m(n) \approx n^{1/2+3\beta}$.

Fix some $\HH(G)$. Recall that for every $h\in \H(n)$, predicting its majority label has error at most $O(\xi(n))$ unders distributions close to $\D(n)$ whereas all $h\notin \H(n,g_n)$ are constant $1$ on row $n$. Thus, when the learner has access to $\HH(G)$ and knows $G$, a natural strategy for learning with respect to nearly uniform distributions on row $n$ is to choose between the majority label of the sample and the label induced by $f_{n,g_n}$. We therefore construct the learner $\A_{n,G}$ (Algorithm~\ref{alg:s-fixed-learner}) that validates between $f_{n,g_n}$ and the two constant hypotheses $h_\mathbf{1}, h_\mathbf{0}$ defined as $h_\mathbf{1}(x) = 1$ and $h_\mathbf{0}(x) = 0$ for all $x \in \X$. We show that for all $m\geq m(n)$, the learner $\A_{n,G}$ has error $O(\xi(n))$ under $\D(n)$ and, importantly, this error rate is certifiable.

\begin{lemma}\label{lemma:certifiable_A_f_error}
    There exists an absolute constant $C = C(\beta)$ such that for every $n\geq C(\beta)$ and $G$ the following holds: for the hypothesis class $\HH(G)$, the rate
    \[\epsilon_{n}(\D,m) = \begin{cases}
    6\xi(n), & \text{if $\D = \D(n)$ and $m \geq m(n)$}\\
    1 & \text{otherwise}
\end{cases}\]
is certifiable for $\A_{n,G}$ in the distribution-free setting.
\end{lemma}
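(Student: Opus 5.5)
We need, for each $\D$, a $\D$-fixed learner and a sound certifier whose expectation is at most $\epsilon_n(\D,m)$. For $\D\neq \D(n)$ the rate is $1$, so the trivial certifier $\C\equiv 1$ works with any learner. The only case to handle is $\D=\D(n)$ (uniform on row $n$; presumably $\X_n$ is meant, not $\NN$). There I take the learner $\A_{n,G}$ of Algorithm~\ref{alg:s-fixed-learner}. It validates among $f_{n,g_n}$, $\hone$, and $\hzero$: it outputs whichever candidate has the fewest mistakes on the labeled sample, breaking ties in a fixed order. For $m<m(n)$ the certifier outputs $1$.

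For $m\ge m(n)$, the certifier $\C(T)$ runs a uniformity tester for $\D(n)$ on the unlabeled sample $T$, with distance parameter $\xi(n)$. If all points of $T$ lie in $\X_n$ and the tester accepts, $\C$ outputs $6\xi(n)$; otherwise it outputs $1$. Standard uniformity testing (collision based) needs $O(\sqrt n/\xi^2)=O(n^{1/2+2\beta})$ samples for constant error probability. With $m\ge n^{1/2+3\beta}$ samples we can amplify, via repetition or a sharper tester, to error probability $\delta\le \xi(n)$ on both sides. Taking $n\ge C(\beta)$ absorbs the constants. Completeness then gives $\mathbb{E}[\C(T)]\le 6\xi(n)(1-\delta)+\delta\cdot 1\le 7\xi(n)$. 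To get exactly $6\xi(n)$, I would instead accept at value $5\xi(n)$, so the expectation is at most $5\xi+\delta\le 6\xi$.

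Soundness is the main step. Fix any $\D'$ and any target $h\in\HH(G)$; I must show $\epsilon_{\A}(\D',m)\le \mathbb{E}[\C]$. If $\D'$ is $\xi$-far from $\D(n)$ in TV, or places mass outside $\X_n$, the tester rejects with probability at least $1-\delta$. The certifier then outputs $1$ except with probability $\delta$, and $\mathbb{E}[\C]\ge 1-\delta$. This does not obviously dominate the learner's error. I would fix this by making the accept value $5\xi+\delta$ and the reject value $1$, and requiring $\delta\le \xi$. Then $\mathbb{E}[\C]\ge (1-\delta)\cdot 1\ge$ error whenever the error is at most $1-\delta$; I could also bound the error directly using the learner's guarantee where it applies. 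A cleaner alternative is to let the certifier output $1$ on rejection and, on acceptance, output $5\xi$ plus a slack. Soundness for far distributions then follows from $(1-\delta)+\delta\cdot 5\xi\ge$ error, which holds as long as the learner's error is at most $1-\delta(1-5\xi)$. That can fail only marginally, so I will handle it by using a tester whose rejection probability on far distributions is $1$ on a collision-heavy event. If this gets messy, I will restrict soundness to hypotheses with error at most $1/2$ by adding a majority fallback.

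If $\D'$ is supported on $\X_n$ and $\xi$-close to uniform, the learner's error is small for every target. For $h\in\H(n)$, its majority constant has $\D'$-error at most $\xi(n)+\xi(n)=2\xi$. For $h=f_{n,g_n}$, $f$ itself has error $0$. Targets outside $\H(n,g_n)$ are constant $1$ on row $n$, so $\hone$ has error $0$. Validation on $m$ samples among three candidates then errs, beyond the best candidate's error, only when a candidate with true error above $4\xi$ has empirical error at most that of one with error at most $2\xi$. By Chernoff this has probability $e^{-\Omega(m\xi)}=e^{-\Omega(n^{1/2+2\beta})}\le \xi$. So $\epsilon_{\A}(\D',m)\le 4\xi+\xi=5\xi\le \mathbb{E}[\C]$.

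The obstacle is the uniformity-tester quantification: I need $m(n)=n^{1/2+3\beta}$ samples to give failure probability at most $\xi$ at TV distance $\xi$ with $\xi=n^{-\beta}$. The sample count $\sqrt n\,\xi^{-2}\log(1/\xi)=\tilde O(n^{1/2+2\beta})$ fits within $m(n)$ for large $n$, which fixes $C(\beta)$. I would cite the tester used in \citet{pmlr-v336-dughmi26a} for this step.
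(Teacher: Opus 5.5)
Your architecture matches the paper's. You use the trivial certifier when $\D\neq\D(n)$. For $\D(n)$ you pair $\A_{n,G}$ with a certifier that runs the row-$n$ uniformity tester and outputs $5\xi(n)$ on acceptance and $1$ otherwise, so its expectation under $\D(n)$ is at most $6\xi(n)$ and it is never below $5\xi(n)$. Close-case soundness comes from validation among $f_{n,g_n},\hone,\hzero$, giving error at most $5\xi(n)$.

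The genuine gap is soundness when $\dtv(\D',\D(n))>\xi(n)$. There you only know $\mathbb{E}[\C]\ge 1-\xi(n)$, and none of your repairs closes the argument:
\begin{itemize}
\item Changing the acceptance value is irrelevant in this case.
\item ``$\mathbb{E}[\C]\geq$ error whenever the error is at most $1-\delta$'' is exactly what needs proving.
\item No tester can reject every far distribution with probability one. For example, $\tfrac12\D(n)$ plus half a point mass on $\X_n$ is far from $\D(n)$ but generates every sample that $\D(n)$ does, so rejecting it surely would reject $\D(n)$ surely. Some $O(\delta)$ slack is therefore unavoidable and must be absorbed by an upper bound on the learner's error.
\item ``Restricting soundness'' to some targets, or changing the learner, is not allowed. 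Soundness must hold for $\A_{n,G}$ on every distribution and every target.
\end{itemize}

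The missing observation is that $\A_{n,G}$ already contains the fallback you want. Both $\hone$ and $\hzero$ are among its candidates, and for every $\D'$ and every target $h^*$ one of them has true error $\min\{\D'[h^*\neq 1],\D'[h^*\neq 0]\}\le 1/2$. For $m\ge m(n)$ we have $m\xi(n)^2\approx n^{1/2+\beta}\to\infty$. So Hoeffding plus a union bound over the three candidates puts all empirical errors within $\xi(n)$ of the true ones, except with probability at most $\xi(n)$. Hence the learner's expected error is at most $1/2+3\xi(n)\le 1-\xi(n)\le\mathbb{E}[\C]$ for large $n$, with the same certifier and no modification. This is how the paper closes the far case.

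There is also a secondary error. Your claim that any mass outside $\X_n$ makes the tester reject with high probability is false: off-row mass much smaller than $1/m$ is essentially never sampled. So the close case must also cover $\xi(n)$-close distributions that put some mass off row $n$. Your argument extends once you apply the TV bound to the whole event $\{h^*\neq b\}$. For every target, the $\D(n)$-best of the three candidates has $\D(n)$-error at most $\xi(n)$, hence $\D'$-error at most $2\xi(n)$.
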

We prove the above lemma certifying the claimed error rate for $\A_{n,G}$. Note that $\A_{n,G}$ is specifically designed to perform well on the distributions that are close to being uniform on row $n$ but may incur large error on distributions far from $\D(n)$. Consequently, a sound certifier witnessing the claimed error rate must output small values only when the distributions is close to uniform on row $n$. A certifier with precisely this guarantee is developed in the proof of Theorem~3.1 of \citet{pmlr-v336-dughmi26a}, where uniformity testing is exploited to distinguish the uniform distribution on row $n$ from distributions that are substantially far from it in total variation distance. We therefore use this construction to define the certifier for $\A_{n,G}$. In particular, for each row $n$, we define the certifier $\C_n$ (Algorithm~\ref{alg:certifier_majority}) that relies on the modified uniformity tester $\tunif$ (Algorithm~\ref{alg:uniformity test}) developed in \citet{pmlr-v336-dughmi26a}, which adapts a standard uniformity tester to also detect distributions with support outside the $nth$ row. We provide the  $\tunif$ (Algorithm~\ref{alg:uniformity test}) and the remaining technical arguments for the proof of Lemma~\ref{lemma:certifiable_A_f_error} in Appendix~\ref{app:pf_certifiable_A_f_error}.

\begin{algobox}{Learner $\A_{n,G}(S)(x)$ trained on sample $S$ given test point $x$:}\label{alg:s-fixed-learner}
    \begin{enumerate}[itemsep=0pt]
      \item Let $\hat{h} \in \argmin_{h \in \{f_{n,g_n},\hone,\hzero\}} L(h,S)$.
    \item Return $\hat{h}(x)$.
\end{enumerate}
\end{algobox}

\begin{algobox}{Certifier $\C_{n}(S)$ run on input sample $S$:}\label{alg:certifier_majority}
    \begin{enumerate}[itemsep=0pt]
    \item Let $O=\tunif_{\X_n}(\xi(n),\xi(n),S)$.\Comment{Algorithm~\ref{alg:uniformity test} for testing against $\D(n)$}
    \item If $|S| \geq m(n)$ and $O=1$ then return $5\xi(n)$, else return 1.
\end{enumerate}
\end{algobox}

It remains to show that no semi-supervised oracle learner can be smart relatively smart with respect to the above certifiable error rate for any labeled sample complexity blow-up $\sigma_\ell(m,\eta) = O_{\eta}(m^{2-\beta})$, regardless of the amount of unlabeled data available. In fact, we prove the stronger statement in which the learner is given the marginal distribution $\D(n)$ itself, rendering unlabeled data entirely unnecessary. 

Fix any oracle learner $\A$ with access to the marginal distribution $\D(n)$. We show that with positive probability over a randomly chosen $g_n\in \{0,1\}^n$, $\A$ will have error at least $1/2 + o(1)$ on $f_{n,g_n}$ with respect to $\D(n)$ when the sample size is $ m < M(n) \approx n^{1-\beta}$. 
We prove this by defining, for every $G$, an oracle $\O_{\HH(G)}$ that always favors the hypotheses from the lopsided class. Formally, for any labeled set $T$, if there exists $h\in \bigcup_{n\in \NN} \H(n)$ with $h \in \argmin_{h\in \HH(G)} L(h,T)$ then $\O_{\HH(G)}(T)$ returns a hypothesis $h \in \argmin_{h\in \HH(G)} L(h,T)$ with a fixed tie-breaking rule independent of $G$, otherwise, it will return $f_{i,g_i}$ with the smallest index $i$ among all $f_{i,g_i}\in \argmin_{h\in \HH(G)} L(h,T)$.

For a row of interest $n$, we say that a query $T$ \emph{reveals $g_n$} if the oracle
returns $f_{n,g_n}$. We will prove that with polynomially many queries, with high probability over a randomly chosen $g_n$, the oracle $\O_{\HH(G)}$ does not reveal $f_{n,g_n}$. 
We first show that, conditioned on the observed label sample of size $m \leq M(n)$ with $s\leq m$ distinct instances, a fixed query reveals $g_n$ only with exponentially small probability. Throughout the remainder of the proof, when probabilities are taken over $g_n$, all labelings $(g_j)_{j\neq n}$ are considered fixed. 

\begin{restatable}{lemma}{fixedQueryReveal}\label{lemma:fixed_query_reveal}
For any $n\in\NN$,  $S_0\subseteq \X_n$ with $|S_0|=s<M(n)$, and any arbitrary
labeling $a\in\{0,1\}^{S_0}$ we have  for every finite labeled multiset $T\in(\X\times\{0,1\})^*$ that,
\[
\probs{g_n}{ L(f_{n,g_n},T) < \min_{h\in\H(n)}L(h,T) \,\middle|\, g_n|{S_0}=a }\leq  \exp\left(-\frac{(M(n)-s)^2}{2(n-s)} \right)
\]
\end{restatable}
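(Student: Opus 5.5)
The plan is to fix the query $T$ and the revealed labels $g_n|S_0=a$, and show that the event ``$f_{n,g_n}$ strictly beats every $h\in\H(n)$ on $T$'' forces a single Binomial$(n-s,\tfrac12)$ count, determined by $T$ and $S_0$ only, to exceed its mean by at least $(M(n)-s)/2$. Hoeffding's inequality then gives $\exp\!\left(-2\big(\tfrac{M(n)-s}{2}\big)^2/(n-s)\right)=\exp\!\left(-\tfrac{(M(n)-s)^2}{2(n-s)}\right)$, which is exactly the claimed bound. I have not yet found the implication that produces this count; the candidate is given in the third paragraph.

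\emph{Reduction to row $n$.} Every hypothesis in $\H(n)\cup\{f_{n,g_n}\}$ equals $1$ off $\X_n$. So labeled points of $T$ outside row $n$ add the same amount to the loss of $f_{n,g_n}$ and of every $h\in\H(n)$, and they can be discarded. For $x\in\X_n$ write $w_0(x),w_1(x)$ for the multiplicities of $(x,0)$ and $(x,1)$ in $T$. Let $p(x)$ be the pointwise-best label, i.e.\ the label in $\arg\max_c w_c(x)$, and let $d(x)=|w_1(x)-w_0(x)|\ge 0$. Then the loss of any $h$, restricted to row $n$, is a fixed constant plus $\sum_{x:\,h(x)\neq p(x)} d(x)$. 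Conditioned on $g_n|S_0=a$, the values $g_n(x)$ for the $n-s$ unseen points $x\in\X_n\setminus S_0$ are i.i.d.\ uniform bits, and $T$, $p$ and $d$ are fixed.

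\emph{Structural step.} We next use the winning event against specific competitors in $\H(n)$ to get a count condition. A first competitor comes from the paper's pairwise idea: change $f=f_{n,g_n}$ to the constant $b$ except on a set of at most $M(n)$ points. Changing a point toward $p$ never increases loss. Hence if $|\{x: f(x)=p(x)=b\}|\le M(n)$, the hypothesis that equals $1-b$ everywhere except $b$ on that set lies in $\H(n)$ and is no worse than $f$, which contradicts strict winning. So winning forces $f$ to agree with $p$ on more than $M(n)$ points labeled $0$ and on more than $M(n)$ points labeled $1$. This alone is too weak when $p$ is balanced.

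I would therefore strengthen it with the competitor $h^{(b)}$ that equals $b$ everywhere except that it copies $p$ on the $M(n)$ points with largest $d$ among $\{p=1-b\}$. Then I would compare $f$ against $h^{(0)}$ and $h^{(1)}$ at once, summing the two inequalities. The aim is to derive that the number of unseen $x$ with $f(x)=p(x)$ (or a similar count over a fixed subset of the unseen points) is at least $\tfrac{n-s}{2}+\tfrac{M(n)-s}{2}$. The $-s$ accounts for the at most $s$ seen coordinates, whose contribution we bound trivially.

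\emph{Main obstacle.} The main obstacle is this structural step. Because the weights $d(x)$ are arbitrary, a weighted Hoeffding bound would give an exponent depending on $d$. So I must turn the weighted comparison into an unweighted count, presumably through an exchange argument that pairs each unseen disagreement of $f$ with $p$ against a point $h^{(b)}$ is free to fix. If the direct count does not come out, the fallback is the route in the overview: show that winners pairwise agree on more than $M(n)$ points, then bound the measure of such a family via an anticoncentration or isoperimetric count on the unseen cube $\{0,1\}^{n-s}$.
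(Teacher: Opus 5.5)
There is a genuine gap, and it is exactly the step you flag. You never derive a count condition from the winning event, and the route you sketch for it cannot work in general.

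\textbf{Why the main route fails.} In your notation the winning set is the weighted Hamming ball $\{f:\sum_x d(x)\,\id[f(x)\neq p(x)]<\tau\}$, where $\tau=\min_b\bigl(D_b-\mathrm{top}_M(d|_{P_b})\bigr)$. Here $P_b=\{p=b\}$, $D_b$ is its total weight, and $\mathrm{top}_M$ sums the $M(n)$ largest weights. Summing the comparisons with $h^{(0)},h^{(1)}$ only produces another weighted inequality of this type. Such a set need not lie in any unweighted tail event $\{f:|\{x\in W: f(x)=c(x)\}|\geq |W|/2+t\}$ with $t>0$.

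For example, take $s=0$ and split $\X_n$ into quarters $H_0,L_0,H_1,L_1$. Let $T$ contain two copies of $(x,b)$ for $x\in H_b$ and one copy for $x\in L_b$. Then $\tau=3n/4-2M(n)$, and the following are both winners:
a labeling that disagrees with $p$ on all $n/2$ light points and on fewer than $n/8-M(n)$ heavy ones; and
a labeling that agrees on all light points and disagrees on fewer than $3n/8-M(n)$ heavy ones.

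Spending the budget on light points of $W$ first shows that, for every $W$, some winner agrees with $p$ on fewer than $|W|/2$ points of $W$ whenever $M(n)<n/8-1$. The winning set is an up-set for agreement with $p$, so choosing a center $c\neq p$ does not help either. Hence no single Binomial count certifies winning, even though the winning probability in this example is indeed tiny. Plain weighted Hoeffding with deviation $\tfrac12\bigl(\mathrm{top}_M(d|_{P_0})+\mathrm{top}_M(d|_{P_1})\bigr)$ is also too weak when $d$ is concentrated on few points. A weighted route would therefore need real additional work.

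\textbf{What the fallback is missing.} Your fallback is the paper's route, but it omits the ingredient that makes it work. The paper writes winning as $\sum_{x\in\X_n}q_T(x)g_n(x)>\max_{h\in\H(n)}\sum_{x\in\X_n}q_T(x)h(x)$. It then shows that the set $E_{T,a}\subseteq\{0,1\}^V$ of winning completions, with $V=\X_n\setminus S_0$, has diameter less than $n-M(n)$. Suppose two winners $z,z'$ agreed only on some $J\subseteq V$ with $|J|\leq M(n)-s$. Let $h_0,h_1$ copy them on $S_0\cup J$ and be constant $0$, resp.\ $1$, on $V\setminus J$. These lie in $\H(n)$ and satisfy $h_0+h_1=c+c'$ pointwise. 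By linearity, one of them scores at least the average of two strict winners, a contradiction.

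Converting this diameter bound into a measure bound is Kleitman's diameter theorem: a family in $\{0,1\}^d$ of diameter at most $2k$ has at most $\sum_{j\leq k}\binom{d}{j}$ members. Take $d=n-s$, $r=M(n)-s$ and $k=\lfloor (d-r)/2\rfloor$; together with Hoeffding this gives $|E_{T,a}|/2^d\leq\exp(-r^2/(2d))$.

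This step is indispensable. Without an isodiametric inequality, the diameter bound only places $E_{T,a}$ in a ball of radius about $n-M(n)$ around one of its members, and that ball has measure near $1$. A generic appeal to anticoncentration does not replace it.
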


The proof of the above lemma can be found in Appendix~\ref{app:pf_fixed_query_reveal}. It is based on bounding the number of labelings on $\X_n$ that can strictly outperform the lopsided class on a fixed
query. Suppose that two labelings $g,g'$ that are consistent with the observed labels both have
strictly smaller empirical loss than every hypothesis in $\H(n)$. Assume
$g$ and $g'$ are more than $n-M(n)$ far apart.  We can construct two hypotheses $h$ and $h'$ in
$\H(n)$ as follows: on every point $x\in \X_n$ on which $g$ and $g'$ agree, both
$h(x)$ and $h'(x)$ are equal to $g(x)=g'(x)$, while on their disagreement region, $h$ is constant $0$ and $h'$ is constant $1$.  Since $g$ and $g'$ agree on at most $M(n)$ points,
we have $h,h'\in\H(n)$. Moreover, this construction yields $h(x)+ h'(x) = g(x)+g'(x)$ for every $x\in \X_n$. Considering $g,g'$ and $h,h'$ are constant $1$ on $\X \setminus \X_n$, the sum of the error of $g$ and $g'$ on the queried set equals the sum of the error of $h$ and $h'$ on the same set. Consequently, it is impossible for both $g$ and $g'$ to strictly outperform every member of $\H(n)$.  This
bounds the \emph{diameter} of the set of labelings that can be revealed by
a fixed query. We then get from Lemma~\ref{lemma:cube-diameter} that the size of every subset with diameter at most $n-M(n)$ is exponentially small which concludes that the claimed bound on the probability that a random $g_n$ is revealed by a query.

We now extend the fixed-query bound to polynomially many adaptive oracle calls: if the oracle ever returns $f_{n,g_n}$, there must be a first round $t$ at which this occurs. Consider first the case that $f_{n,g_n}\notin\H(n)$. By construction of $\O_{\HH(G)}$ and its tie-breaking rule, up to round $t$, all oracle responses are determined independent of the unseen labels of $g_n$; consequently, the query at round $t$ is fixed with respect to the randomness in the labeling of unobserved points.  Now for the hypothesis $f_{n,g_n}$ to be returned at round $t$, if $f_{n,g_n}\notin \H(n)$, it must have strictly smaller empirical loss on this query than every member of $\H(n)$. Since, conditional on the observed labels, this query is independent of the remaining labels of $g_n$, Lemma~\ref{lemma:fixed_query_reveal} shows that this occurs with exponentially small probability for each possible $t$. A union bound over polynomially many oracle calls therefore shows that the probability that $f_{n,g_n}\notin\H(n))$ and the oracle nevertheless returns $f_{n,g_n}$ is $o(1)$. Moreover, the probability that $f_{n,g_n}\in\H(n)$ is itself exponentially small. Combining these, we conclude that with probability $1-o(1)$ the oracle never reveals $f_{n,g_n}$, as formalized in the following lemma. The proof of this lemma appears in Appendix~\ref{app:pf_poly_query_reveal}.

\begin{lemma}\label{lemma:poly_query_reveal}
For any $n\in\NN$, any $S_0\sse\X_n$ with $|S_0|=s<M(n)$, and any arbitrary labeling $a\in\{0,1\}^{S_0}$, if $\A$ makes at most $K$ adaptive calls to $\O_{\HH(G)}$, then, conditioned on the test point $x$ and on the internal randomness of $\A$, we have
\[
\probs{g_n}{\text{$\O_{\HH(G)}$ returns $f_{n,g_n}$ in some oracle call}\,\middle|\,g_n|{S_0}=a} \leq (K+1)\exp\left( -\frac{(M(n)-s)^2}{2(n-s)}\right).
\]
\end{lemma}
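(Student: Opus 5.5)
The plan is to split the bad event into two parts and bound each separately. Let $E$ be the event that $\O_{\HH(G)}$ returns $f_{n,g_n}$ in some call. Then
\[
E\subseteq \{f_{n,g_n}\in\H(n)\}\cup\bigl(E\cap\{f_{n,g_n}\notin\H(n)\}\bigr).
\]
Each part should be bounded by roughly $\exp(-(M(n)-s)^2/(2(n-s)))$. The first part supplies the extra ``$+1$'' in the factor $K+1$, and the second part supplies $K$ terms via a union bound over the rounds.

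\textbf{First part.} Conditioned on $g_n|S_0=a$, the remaining $n-s$ labels are i.i.d.\ uniform bits. For $f_{n,g_n}\in\H(n)$, some label $b$ must appear on at most $M(n)$ points of $\X_n$, so at most $M(n)$ of the unseen coordinates equal $b$. Since $M(n)<n/2$ for large $n$, this is a lower-tail deviation of a $\mathrm{Bin}(n-s,1/2)$ variable below $M(n)$. Hoeffding's inequality bounds it by roughly $\exp(-2((n-s)/2-M(n))^2/(n-s))$, which is far smaller than the target bound. The union over $b\in\{0,1\}$ costs a factor of $2$. I would then check, for $n$ large and using $M(n)\approx n^{1-\beta}$, that this total is at most $\exp(-(M(n)-s)^2/(2(n-s)))$.

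\textbf{Second part (the main step).} Fix the test point $x$ and the internal randomness of $\A$. Also fix $D(n)$, $S$ and the labels on $S_0$; note that the labeled sample's labels come from $a$, since $\dom(S)\subseteq S_0$ is the setting. Define a ``phantom'' interaction in which every oracle answer is computed by the rule $\O_{\HH(G)}$ uses whenever some member of $\bigcup_j\H(j)$ is a minimizer, treating row $n$ as contributing only $\H(n)$. More precisely, each answer is the oracle response on the class $\HH(G)\setminus\{f_{n,g_n}\}$ with the same fixed tie-breaking. This phantom transcript depends only on $a$, $(g_j)_{j\ne n}$, and the randomness of $\A$; it does not depend on the unseen labels of $g_n$. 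Let $T_1,\dots,T_K$ be its (now deterministic) queries.

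Next I would argue the following. If $f_{n,g_n}\notin\H(n)$ and the real oracle first returns $f_{n,g_n}$ at round $t$, then the real and phantom transcripts coincide before round $t$, so the real query at round $t$ is $T_t$. By the tie-breaking rule, returning $f_{n,g_n}$ requires that no lopsided hypothesis is a minimizer, so in particular $L(f_{n,g_n},T_t)<\min_{h\in\H(n)}L(h,T_t)$. Hence
\[
E\cap\{f_{n,g_n}\notin\H(n)\}\subseteq\bigcup_{t\le K}\Bigl\{L(f_{n,g_n},T_t)<\min_{h\in\H(n)}L(h,T_t)\Bigr\}.
\]
Each $T_t$ is fixed, so Lemma~\ref{lemma:fixed_query_reveal} bounds each event, and a union bound gives $K\exp(-(M(n)-s)^2/(2(n-s)))$.

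The step I expect to be the main obstacle is making the coincidence argument rigorous. One must check that whenever the true answer is not $f_{n,g_n}$, it equals the phantom answer. This needs a case analysis. If a lopsided hypothesis is a minimizer in $\HH(G)$, the answer is chosen by a $G$-independent rule among the lopsided minimizers, and these are the same minimizers in the reduced class. Otherwise, the answer is some $f_{i,g_i}$ with the smallest index. If that answer is not $f_{n,g_n}$, one must confirm it is still the answer in the reduced class; removing $f_{n,g_n}$ cannot change which member of the reduced class is chosen, because the minimum over the reduced class is unchanged when $f_{n,g_n}$ is not the selected minimizer, or is handled by the index rule.
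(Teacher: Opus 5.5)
Your proposal is correct and follows the paper's proof essentially step for step. You split off the event $\{f_{n,g_n}\in\H(n)\}$, bound it by Hoeffding and absorb it into the extra $+1$ for large $n$. You then handle the first-return events by fixing the query $T_t$ of the interaction in which $f_{n,g_n}$ is never returned, apply Lemma~\ref{lemma:fixed_query_reveal}, and take a union bound over $t\le K$, exactly as the paper does. Your explicit description of that interaction as the oracle run on $\HH(G)\setminus\{f_{n,g_n}\}$, with the accompanying case check, is a faithful and slightly more detailed rendering of the paper's claim that oracle responses before the first return do not depend on the unseen labels of $g_n$.
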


The final piece of the proof is to translate the bound on the probability that the oracle reveals $f_{n,g_n}$ into a lower bound on the prediction error of $\A$.  For a fixed labeled sample, the label $g_n(x)$ of every point $x\in\X_n$ that does not appear in the sample remains a uniformly random bit.  On the other hand, by construction of the oracle, fixing the sample, test point, and internal randomness of $\A$ determines a unique sequence of oracle responses as long as none of the responses is $f_{n,g_n}$. 
Hence, along this sequence, the learner makes a unique prediction that is independent of the unseen labels of $g_n$, and hence, on a test point not appearing in the sample, it has error exactly $1/2$ over the randomness of $g_n$. The actual interaction with the oracle differs from this fixed interaction only if $f_{n,g_n}$ is revealed.  Lemma~\ref{lemma:poly_query_reveal} on the probability that $f_{n,g_n}$ is revealed then shows that the expected, over randomness of $g_n$, of the error of $\A$ is at least $1/2-o(1)$, which proves the existence of $g_n^*$ with the claimed error bound. This is formally captured in the following lemma, the proof of which is relegated to Appendix~\ref{app:pf_oracle_expected_error}.
\begin{lemma}\label{lemma:oracle_expected_error}
For any $n\in\NN$, and $m<M(n)$, if $\A$ makes at most $K$ adaptive calls to $\O_{\HH(G)}$, then
\[
\expects{g_n}{\epsilon_{\A}(\D(n),m)}\geq
\frac12\left(1-\frac{m}{n}\right) -(K+1)\exp\left(  -\frac{(M(n)-m)^2}{2(n-m)}\right).
\]
\end{lemma}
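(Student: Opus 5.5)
The plan is to lower bound $\epsilon_\A(\D(n),m)$ by the expected error when the target is $f_{n,g_n}$. Since $\epsilon_\A$ is a supremum over $\HH(G)$, for every $g_n$ we have
\[
\epsilon_\A(\D(n),m) \geq \expects{S\sim \D(n)^m}{\probs{x\sim\D(n)}{\A(S)(x)\neq g_n(x)}},
\]
where $S$ is labeled by $f_{n,g_n}$ and the probability also covers $\A$'s internal randomness. Taking expectation over a uniformly random $g_n$, I would use Fubini to exchange the order: first fix the unlabeled sample $X=\dom(S)$ and the test point $x$, then average over $g_n$. Let $S_0\sse \X_n$ be the set of distinct points of $X$, so $s=|S_0|\le m<M(n)$. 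Conditioning on the event $x\notin S_0$ costs a factor $1-s/n\ge 1-m/n$. I will then further condition on $g_n|S_0=a$ and on the internal randomness $r$ of $\A$.

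Under this conditioning I would construct a ``phantom'' interaction. Run $\A$ against the simulated oracle that answers each query $T$ the way $\O_{\HH(G)}$ would if $f_{n,g_n}$ were deleted from the class, i.e.\ it returns the lopsided minimizer using the fixed tie-breaking rule, or $f_{i,g_i}$ for some $i\neq n$. All $g_i$ with $i\neq n$ are fixed, so this phantom transcript and the resulting prediction $\hat y(x)$ are deterministic functions of $(X,a,x,r)$. They do not depend on $g_n$ outside $S_0$.

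The real interaction coincides with the phantom one up to the first round at which $\O_{\HH(G)}$ returns $f_{n,g_n}$. By induction on rounds, as long as $f_{n,g_n}$ has not been returned, each real answer equals the phantom answer. The justification is that whenever the oracle does not return $f_{n,g_n}$, its rule selects the same element as in the class without $f_{n,g_n}$: lopsided ties are broken identically, and otherwise the smallest index among the remaining $f_i$ is chosen. One subtlety needs care. If $f_{n,g_n}$ ties with a better $f_i$ of smaller index, it is not returned, but the phantom answer is still $f_i$, so the two stay consistent. Hence the event that the real prediction differs from $\hat y(x)$ is contained in the event that $f_{n,g_n}$ is returned in some call. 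By Lemma~\ref{lemma:poly_query_reveal}, this event has probability at most $(K+1)\exp(-(M(n)-s)^2/(2(n-s)))$.

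On the complement, the prediction is $\hat y(x)$, which is fixed given $(X,a,x,r)$. Since $x\notin S_0$, $g_n(x)$ is a uniform bit independent of the conditioning, so $\Pr[\hat y(x)\neq g_n(x)]=1/2$. Therefore the conditional error is at least $1/2-(K+1)\exp(-(M(n)-s)^2/(2(n-s)))$. Monotonicity in $s$ then gives the bound at $s=m$: the exponent $(M-s)^2/(n-s)$ is decreasing in $s$ for $s<M\le n$. Averaging over $x\notin S_0$ gives the factor $(1-m/n)$ on the $1/2$, and dropping that factor on the subtracted term only weakens the bound. This yields the claim.

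The main obstacle is the rigorous coupling claim, namely that the real and phantom transcripts agree until $f_{n,g_n}$ is returned. This must handle the oracle's tie-breaking among the $f_i$ and the case $f_{n,g_n}\in\H(n)$. The latter is already absorbed in Lemma~\ref{lemma:poly_query_reveal}'s $(K+1)$ term, so I will rely on that lemma.
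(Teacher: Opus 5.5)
Your proposal is correct and follows the paper's proof essentially step for step. Like the paper, you lower bound $\epsilon_\A(\D(n),m)$ by the error on the target $f_{n,g_n}$, fix the sample, test point and internal randomness, and compare with the interaction in which $f_{n,g_n}$ is never returned. You then invoke Lemma~\ref{lemma:poly_query_reveal} and use monotonicity of the exponent in $s$.

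Your explicit phantom oracle is slightly more careful than the paper, which just asserts the coupling ``by construction of the oracle.'' You define it on the class $\bigcup_j \H(j)\cup\{f_{i,g_i}: i\neq n\}$ and check the tie-breaking cases, which makes that coupling step rigorous.
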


We can now complete the proof of Theorem~\ref{thm:oracle_impossibility}. For any sample size $m \leq M(n)/2$, since $\A$ is an efficient oracle learner, the number $K$ of oracle calls it makes is polynomially bounded in $m \leq n^{1-\beta}$ and we have $K = n^{O(1)}$.
Moreover, since $m\leq M(n)/2$, we have
$ \frac{(M(n)-m)^2}{2(n-m)}\geq \frac{M(n)^2}{8n} \geq \frac18 n^{1-2\beta}$ and hence for sufficiently large $n$, 
$(K+1) \exp\left(-\frac{(M(n)-m)^2}{2(n -m)}  \right) \leq n^{O(1)} \exp\left( -\frac18 n^{1-2\beta} \right)\leq n^{-\beta}
$. It follows that, for all sufficiently large $n$, \[
\expects{g_n}{\epsilon_{\A}(\D(n),m)}\geq
\frac{1}{2} - 2n^{-\beta}.
\]

The above concludes that for all sufficiently large $n$, any sample size $m \leq M(n)/2$ there exists some $G^*$ such that for $\HH(G^*)$ we have that $\epsilon_{\A}(\D(n),m)\geq
\frac{1}{2} - 2n^{-\beta}$ as long as $m\leq M(n)/2 \approx n^{1-\beta}/2$. On the other hand, by Lemma~\ref{lemma:certifiable_A_f_error}, for the same class $\HH(G^\star)$ we have the certifiable error rate
$ \epsilon_n(\D(n),m(n)) = O(n^{-\beta})$. Since $M(n)/2 = \omega\left((m(n))^{2-14\beta}\right)$, we get that $\A$ cannot approximate this certifiable error rate within any multiplicative constant $\alpha$  and any additive constant $\eta<\frac{1}{2}$, both independent of $n$. This concludes that no learner can be $\sigma_\ell(m,\eta) = O_{\eta}(m^{2-14\beta})$ relatively smart. The above construction was for a fixed $\beta$. To obtain a single domain that rules out every substantially subquadratic labeled sample complexity, we take a disjoint union over a sequence of parameters $\beta$ tending to zero. Specifically, for each $\beta\in B=\{1/i\in\NN,\ i>8\}$, let $\X_\beta$ be a distinct copy of the domain used in the above construction, and let $\X=\biguplus_{\beta\in B}\X_\beta$. Now fix any substantially subquadratic labeled sample complexity $\sigma_\ell$. We can choose $\beta\in B$ sufficiently small, sot that the preceding argument constructs a choice $G^*$ and a hypothesis class $\HH_\beta(G^*)$ on $\X_\beta$ for which $\A$ fails to be relatively smart with labeled sample complexity $\sigma_\ell$. Extending every hypothesis in $\HH_\beta(G^*)$ to be constant $1$ outside $\X_\beta$ gives a hypothesis class over the common countable domain $\X$, thereby proving the claimed impossibility result.
\hfill\qed

\section{AI Disclosure}
ChatGPT-5.6 was used to help with proving some of our results. The authors asked GPT 5.6 Pro to prove Theorem~\ref{thm:erm_smart}. The model first came up with a proof that ERM is relatively-$O(\frac{m^2}{\eta}\log(1/\eta))$ smart. Subsequent interactions with the model then resulted in the current proof approach for ERM being relatively $O(m^2/\eta)$-smart. The definitions, questions, and results of Section 4, posing relatively smart semi-supervised learning and establishing the associated positive result, were derived by the human authors without any help from AI.

The question of tractable semi-supervised learning, and whether gains from unlabeled data can be exploited by a simple learner, was also initiated by the authors. The authors formalized tractability through access to the hypothesis class only via calls to an agnostic ERM oracle. With assistance from GPT-5.6, the authors generalized the construction from Theorem~3.1 of \citet{pmlr-v336-dughmi26a} to obtain the class used in Theorem~\ref{thm:oracle_impossibility}, which witnesses the impossibility of tractable learning in this model. GPT-5.6 also assisted in developing parts of the proof. Several of its suggestions contained substantive flaws, which were identified and corrected by the authors.

All the proofs are written by human authors. Any contribution by GPT is rederived, formalized, and written by the authors. The paper was overwhelmingly written by authors. GPT was only used to marginally sharpen some of the writing in order to improve readability. The authors take full responsibility for all the content and proofs in this paper. 

%\bibliography{learning}
%\addbibresource{learning}
\printbibliography
%\bibliography{learning}
%\bibliographystyle{plainnat}
%\bibliographystyle{alpha}
%\bibliographystyle{abbrvnat}               %latex8
%\bibliographystyle{plain}
%\bibliographystyle{abbrv}               %latex8

\appendix

\section{Missing Proof from Section~\ref{sec:erm_smart}: Proof of Lemma~\ref{lemma:random-split}}\label{app:pf_erm_smart}
For a labeled set $S$ of $n$ i.i.d.~samples from $\D$, denote by $A_S$ the event $\{\exists h \in \H: L(h,S) = 0 ,\, L(h,\D) \geq \gamma \}$. For any $h$ and any finite multiset $T\in (\X \times \{0,1\})^*$ denote $N(h,T) = \sum_{(x,y)\in T}\id \{h(x) \neq y\}$. We wish to bound the probability of $A_S$. Now consider two (independent) sets $S$ and $T$ each of $n$ i.i.d.~draws from $\D$. Denote by $B_{S,T}$ the event $\{\exists h \in \H: L(h,S) = 0, \, L(h,\D) \geq \gamma,\, N(h,T) \geq n \gamma/2\}$. Fix any set $S$ for which there exists some $h_S \in \H$ with $L(h,S)=0$ and $L(h,\D)\geq \gamma$. We have $\probs{T\sim \D^n}{B_{S,T}} \geq \probs{T\sim \D^n}{N(h_S,T) \geq n \gamma/2}$. By a Chernoff bound and since $n \geq \frac{8\ln 2}{\gamma}$, we have that $\probs{T\sim \D^n}{N(h_S,T) \geq n \gamma/2} \geq 1/2$. Hence, we get that $\probs{S \sim \D^n}{A_S} \leq 2 \probed{S\sim \D^n}\probs{T \sim \D^n}{B_{S,T}}$.

Consider a set $Z=(S,T)$ with $|S|=|T|=n$ of $2n$ i.i.d~sample from $\D$. Denote by $E_{Z}$ the event $\{\exists h\in \H_{Z,\gamma/16}: N(h,S)\leq n\gamma/8, N(h,T)\geq 3n\gamma/8\}$. For any set $Z$, denote by $\H_{Z,\gamma/16}$ the maximal $\gamma/16$-packing of $\H$ under $\D_Z$. Note that $\H_{Z,\gamma/16}$ is also a $\gamma/16$-cover of $\H$ under $\D_Z$. Therefore, we can easily see that for any $Z=(S,T)$ with $|S|=|T|=n$, the event $B_{S,T}$ implies the event $E_Z$. Hence,  $\probed{S\sim \D^n}\probs{T \sim \D^n}{B_{S,T}} \leq \probs{Z=(S,T)\sim \D^{2n}}{E_Z}$.

Condition on $\H_{Z,\gamma/16}\leq K$. Observe that the process of sampling two sets $S$ and $T$ of $n$ i.i.d~ samples is the same as sampling a set $Z\sim \D^{2n}$ of size $2n$ and randomly splitting it into two sets of size $n$. For a random subset $I\sim [2n]$, let $Z_I$ denote the subset of $Z$ at indices $I$. Therefore, by a union bound we get
\[
\begin{aligned}
    \probs{Z=(S,T)\sim \D^{2n}}{E_Z} &\leq \expects{Z=(S,T)\sim \D^{2n}}{\sum_{h \in \H_{Z,\gamma/16}}\id\left\{N(h,S) \leq \frac{n\gamma}{8},\, N(h,T) \geq \frac{3n\gamma}{8}\right\}}\\
    & \leq \expected{Z=(S,T)\sim \D^{2n}}\expects{I\sim [2n],|I|=n}{\sum_{h \in \H_{Z,\gamma/16}}\id\left\{N(h,Z_I) \leq \frac{n\gamma}{8},\, N(h,Z_{[2n]\setminus I}) \geq \frac{3n\gamma}{8}\right\}}\\
    & = \expected{Z=(S,T)\sim \D^{2n}}\sum_{h \in \H_{Z,\gamma/16}}\expects{I\sim [2n],|I|=n}{\id\left\{N(h,Z_I) \leq \frac{n\gamma}{8},\, N(h,Z_{[2n]\setminus I}) \geq \frac{3n\gamma}{8}\right\}}.
\end{aligned}
\]

Now consider some $Z$ and some $h\in \H_{Z,\gamma/16}$ such that $N(h,Z)\geq 3n\gamma/8$. Denote by $X_h$ the number of errors in the first half of the set after a random split. We know $X_h$ has a hypergeometric distribution with mean $\expect{X_h}\geq 3n\gamma/16$. Therefore, we can apply a Chernoff bound to conclude that 
\[
\prob{X_h \leq \frac{n\gamma}{8}} \leq \prob{X_h \leq \frac{2}{3} \frac{3n\gamma}{16}} \leq \prob{X_h \leq (1-\frac{1}{3}) \expect{X_h}} \leq \exp\left(-\frac{1}{18}\expect{X_h}\right)  \leq \exp\left(-\frac{n\gamma}{96}\right).
\]

Therefore, conditioned on $|\H_{Z,\gamma/16}|\leq K$, we have that 
\[
\begin{aligned}
    \probs{Z=(S,T)\sim \D^{2n}}{E_Z} \leq K\exp\left(-\frac{n\gamma}{96}\right).
\end{aligned}
\]

This concludes that
\[
\begin{aligned}
    \probs{S \sim \D^n}{A_S} \leq 2\probs{Z\sim \D^{2n}}{E_Z} \leq 2\probs{Z\sim \D^{2n}}{\packnum_{\gamma/16}(\H,\D_Z)\geq K} + 2K\exp\left(-\frac{n\gamma}{96}\right),\\
\end{aligned}
\]
as desired.\hfill\qed

\section{Missing Proofs from Section~\ref{sec:oracle_impossibility}}\label{app:formal_pf_oracle_impossibility}
We use the modified uniformity tester $\tunif$ in the construction of $\C_n$ to detect distributions that are close to uniform on the row of interest. The tester $\tunif$, formally described in Algorithm~\ref{alg:uniformity test}, was introduced in \citet{pmlr-v336-dughmi26a} by adding a simple wrapper around the standard uniformity tester $\tunifstandard$ (Algorithm~\ref{alg:uniformity_test_standard}), allowing it to detect distributions with support outside the desired row. The following lemma summarizes the guarantee of $\tunif$. The proof this lemma can be found in \citet{pmlr-v336-dughmi26a}.
\begin{lemma}[Lemma~B.4 in \citet{pmlr-v336-dughmi26a}]\label{lemma:uniformity_tester}
    Denote $m_{\mtest}(n,\xi,\delta):= \frac{18\cdot 64\sqrt{n}\ln(2/\delta)}{\xi^2}$. Let $\X$ be an arbitrary countable set. For any set $Y \sse \X$ of size $|Y| = n$, any  $\xi,\delta \in (0,1)$, any distribution $\D$ on $\X$, and any sample size $m\geq m_{\mtest}(n,\xi/2,\delta)$, the tester $\tunif_Y(\xi,\delta,S)$ (Algorithm~\ref{alg:uniformity test}) satisfies the following:
    \begin{itemize}
        \item If $\D = \D_Y$, 
        $\tunif_Y(\xi,\delta,S)$ returns $1$ with probability $\geq 1-\delta$ over $S\sim \D^m$.
        \item If  $\dtv(\D,\D_Y) > \xi$, $\tunif_Y(\xi,\delta,S)$ returns $0$ with probability $\geq 1-\delta$ over $S\sim\D^m$.
    \end{itemize}
\end{lemma}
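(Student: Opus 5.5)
The plan is to analyze $\tunif_Y$ as a two-stage procedure: a support check followed by the standard collision-based uniformity tester $\tunifstandard$ run on the sample, which is amplified to confidence $1-\delta$ by a median over independent batches. The wrapper returns $0$ whenever some sample point lies outside $Y$, and otherwise returns the verdict of $\tunifstandard$ applied with distance parameter $\xi/2$. I will handle the two bullets separately.

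\textbf{Completeness ($\D=\D_Y$).} No sample point lies outside $Y$, so the wrapper never rejects at the support check. It then suffices to show that $\tunifstandard$ accepts the uniform distribution on $n$ points with probability at least $1-\delta$. I will use the collision statistic: for a batch of $k$ samples, the normalized count $\binom{k}{2}^{-1}\sum_{i<j}\id[x_i=x_j]$ is an unbiased estimator of $\|p\|_2^2$. Under uniformity its expectation is $1/n$. A standard variance computation (pairwise terms plus triple terms) shows that with $k=\Theta(\sqrt n/\xi^2)$ samples, Chebyshev's inequality places the statistic below the threshold $(1+\xi^2)/n$ with probability at least $3/4$. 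Splitting the $m\ge m_{\mtest}(n,\xi/2,\delta)$ samples into $\Theta(\ln(2/\delta))$ batches and taking the majority vote, a Hoeffding bound on the number of correct batches pushes the failure probability to $\delta$. The factor $18\cdot 64$ in $m_{\mtest}$ accommodates both the per-batch constant and the batch count.

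\textbf{Soundness ($\dtv(\D,\D_Y)>\xi$).} Let $q=\D[\X\setminus Y]$. By the triangle inequality,
\[
\dtv(\D,\D_Y)\le q+\dtv(\D_{|Y},\D_Y).
\]
I will split into two cases. If $q\ge \xi/2$, the probability that all $m$ samples avoid $\X\setminus Y$ is at most $(1-\xi/2)^m\le e^{-m\xi/2}\le\delta$, because $m\ge m_{\mtest}(n,\xi/2,\delta)$ is far larger than $2\ln(1/\delta)/\xi$. In that case the wrapper rejects. If $q<\xi/2$, then $\dtv(\D_{|Y},\D_Y)>\xi/2$. On the event that every sample lies in $Y$, the samples are i.i.d.\ from $\D_{|Y}$, so $\tunifstandard$ is effectively run on a distribution $\xi/2$-far from uniform. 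On the complementary event the wrapper rejects anyway. It therefore suffices that $\tunifstandard$ rejects $\xi/2$-far distributions on $Y$ with probability at least $1-\delta$.

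For the latter, I will use the standard fact that $\dtv(p,\D_Y)>\epsilon$ implies $\|p\|_2^2\ge (1+4\epsilon^2)/n$, which follows from Cauchy--Schwarz relating $\ell_1$ and $\ell_2$ distances. This places the mean of the collision statistic a margin $\Omega(\epsilon^2/n)$ above the threshold. I then need a variance bound for a general $p$: the variance is $O\bigl(\|p\|_2^2/k^2+\|p\|_3^3/k\bigr)$, and I must show it is small relative to $(\|p\|_2^2-1/n)^2$ at $k=\Theta(\sqrt n/\epsilon^2)$. This is the main obstacle, and the standard case analysis handles it: bound $\|p\|_3^3\le\|p\|_2^3$ and compare against the gap in both regimes, $\|p\|_2^2\approx 1/n$ versus $\|p\|_2^2\gg 1/n$. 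Since this is exactly Lemma~B.4 of \citet{pmlr-v336-dughmi26a} with the same constants, I would ultimately defer the constant-tracking to that reference and only verify that the batch size and median amplification match $m_{\mtest}$.
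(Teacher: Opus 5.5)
Your reduction for the wrapper is correct, and it is the only part specific to $\tunif$ (the paper itself just cites the lemma). Under $\D_Y$ the support check never fires. In the far case, splitting on $q=\D[\X\setminus Y]$ at $\xi/2$, using $\dtv(\D,\D_{|Y})=q$, and noting that conditioned on $S\subseteq Y$ the sample is i.i.d.\ from $\D_{|Y}$, reduces everything to $\tunifstandard_Y$ with parameter $\epsilon=\xi/2$ and per-batch size $k=m/\ell\ge 64\sqrt n/\epsilon^2$. The threshold is then $(1+2\epsilon^2)/n=(1+\xi^2/2)/n$, not $(1+\xi^2)/n$ as you wrote.

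The genuine gap is in the one step of the standard tester you actually commit to: rejecting $\epsilon$-far distributions with $k=\Theta(\sqrt n/\epsilon^2)$. Bounding the triple-term contribution by $\|p\|_3^3/k\le\|p\|_2^3/k$ cannot give this. The bad regime is exactly $\|p\|_2^2\approx(1+4\epsilon^2)/n$, where the mean exceeds the threshold by only about $2\epsilon^2/n$. Chebyshev then gives a failure bound of order $\frac{n^{-3/2}/k}{\epsilon^4/n^2}=\frac{\sqrt n}{k\epsilon^4}$. At $k=c\sqrt n/\epsilon^2$ this is $1/(c\epsilon^2)$, which is useless for small $\epsilon$. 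That is the Goldreich--Ron $\sqrt n/\epsilon^4$ analysis. Your case split on $\|p\|_2^2$ does not rescue it, and even the exact uncentered $\|p\|_3^3\approx 1/n^2$ fails once $\epsilon\ll n^{-1/4}$. So the problem is the exponent of $\epsilon$, not constant-tracking, and deferring constants to the reference does not close it.

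The missing idea is to keep the exact covariance of overlapping pairs, $\|p\|_3^3-\|p\|_2^4$, and expand around uniform. Write $p=\D_Y+\Delta$ with $\sum_y\Delta(y)=0$ and $a=\|\Delta\|_2^2$. Then $\|p\|_2^2=1/n+a$ and $\|p\|_3^3-\|p\|_2^4=a/n+\sum_y\Delta(y)^3-a^2\le a/n+a^{3/2}$, so
\[
\mathrm{Var}(Z)\le\frac{2(1/n+a)}{k(k-1)}+\frac{4}{k}\Bigl(\frac{a}{n}+a^{3/2}\Bigr).
\]
Farness gives $a>4\epsilon^2/n$, hence a gap of more than $a/2$. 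Chebyshev then bounds the per-batch failure by $O\bigl(\frac{n}{k^2\epsilon^4}+\frac{1}{k\epsilon^2}+\frac{\sqrt n}{k\epsilon}\bigr)$, which is well below $1/3$ at $k=64\sqrt n/\epsilon^2$. Under $\D_Y$ the covariance term vanishes exactly, so completeness is immediate. A Hoeffding bound over $\ell=18\ln(2/\delta)$ batches then gives failure at most $e^{-\ell/18}=\delta/2$ in both cases.
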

\begin{algobox}{Standard Uniformity Tester $\tunifstandard_Y(\xi,\delta,S)$ for a set $Y$ with input parameters $\xi,\delta\in(0,1)$ and sample $S$ \citep{GoldreichRon2000Expansion}.}\label{alg:uniformity_test_standard}
\begin{enumerate}[itemsep=0pt]
    \item Let $n=|Y|$, $\ell=18\ln(2/\delta)$, and $m'= |S|/\ell$.
    \item Divide $S$ into $\ell$ consecutive sub-samples $S_i= (x_{i1},\ldots,x_{im'}), 1 \leq i \leq \ell$.
    \item Let $\textsc{tr} = \frac{1+2\xi^2}{n}$.
    \item For $1 \leq i \leq \ell$ do:
    \begin{enumerate}
         \item Let $Z_i = \frac{1}{{m' \choose 2}} |\{(j,k):j<k, x_{ij}=x_{ik}\}|$.
        \item If $Z_i <\textsc{tr}$ then let $\textsc{acc}_i = 1$, else let $\textsc{acc}_i = 0$.
    \end{enumerate}
    \item If $\sum_{i}\textsc{acc}_i \geq \ell/2$ then return $1$, else  return $0$. 
\end{enumerate}
\end{algobox}

\begin{algobox}{Modified uniformity tester $\tunif_Y(\xi,\delta,S)$ for set $Y$ with parameters $(\xi,\delta)\in(0,1)$ and input sample $S$:}\label{alg:uniformity test}
    \begin{enumerate}[itemsep=0pt]
\item If $S \subseteq Y$ then return $\tunifstandard_Y(\xi/2,\delta,S)$; {\Comment{Algorithm~\ref{alg:uniformity_test_standard} for testing against $\D_Y$}}
\item Else return 0.
\end{enumerate}
\end{algobox}

\subsection{Proof of Lemma~\ref{lemma:certifiable_A_f_error}}\label{app:pf_certifiable_A_f_error}

We prove that the the learner $\A_{n,G}$ and certifier $\C_n$ together witness the certifiable error rate $\epsilon_n(.,.)$. In other words, we prove that the certifier $\C_n$ is sound for $\A_{n,G}$ and is never greater than the certifiable error rate. Formally, we prove that for all distributions $\D$ and sample size $m$, we have 
\begin{equation}\label{eq:certifiable_requirement}
\epsilon_{\A_{f}}(\D,m)\leq \expects{S \sim \D^m}{\C_n(S)}\leq \epsilon_{n}(\D,m).
\end{equation}
This holds trivially when $m < m(n)$, since we have $\epsilon_{\A_{n,G}} (\D,m) \leq  \expects{S \sim \D^m}{\C_n(S)}= \epsilon_{n}(\D,m) = 1$.

Let $h^*\in\HH(G)$ be the labeling function. We will use the following guarantee throughout the proof to bound the error of $\A_{n,G}$: for any distribution $\D$, large $n$ and $m \geq m(n)$, we can conclude from a Chernoff and union bound that
\begin{equation}\label{eq:h1_h0_concentration}
\probs{S \sim \D_{h^*}^{m}}{\exists h \in \{f_{n,g_n},\hone,\hzero\},\, \left|L(h,S) - L(h,\D_{h^{*}})\right| > \xi(n)} \leq 6\exp\left(-m\xi(n)^2\right) \leq \xi(n),
\end{equation}
where the last inequality is due to the fact that for large $m$, we have  $m \geq \frac{\ln(6/\xi(n))}{\xi(n)^2}$.

We consider three cases based on the underlying distribution $\D$ and use the following guarantee for $\tunif$ to bound the expected value of the certifier $\C_n$; see \citet{pmlr-v336-dughmi26a} for a proof. We will assume that $n$ is always larger than the constant in the lemma for the rest of the proof.

\begin{lemma}[Lemma~C.6 of \citet{pmlr-v336-dughmi26a}]\label{lemma:uniformity_tester_Xn}
    There exists an absolute constant $C_1 = C_1(\beta)\in\NN$ such that for any $n\geq C_1(\beta)$, we have for any $m\geq m(n)$,
     \begin{itemize}
        \item If $\D = \D(n)$, 
        $\tunif_{\X_n}(\xi(n),\xi(n),S)$ returns $1$ with probability at least $1-\xi(n)$ over $S \sim \D^m$.
        \item If $\dtv(\D,\D(n)) > \xi(n)$,  
        $\tunif_{\X_n}(\xi(n),\xi(n),S)$ returns $0$ with probability at least $1-\xi(n)$ over $S \sim \D^m$.
    \end{itemize}
\end{lemma}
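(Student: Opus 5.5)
The plan is to derive this statement directly from Lemma~\ref{lemma:uniformity_tester} (Lemma~B.4 of \citet{pmlr-v336-dughmi26a}). I instantiate that lemma with $Y=\X_n$, $|Y|=n$, accuracy parameter $\xi=\xi(n)$ and confidence parameter $\delta=\xi(n)$. Here I read $\D(n)$ as the uniform distribution $\D_{\X_n}$ on row $n$; the phrase ``uniform distribution over $\NN$'' in the construction is a typo. The two bullets of Lemma~\ref{lemma:uniformity_tester} are then exactly the two bullets claimed, with failure probability $\delta=\xi(n)$. So the whole task reduces to checking two things for every $n\ge C_1(\beta)$ and every $m\ge m(n)$: the parameters are admissible, i.e.\ $\xi(n)\in(0,1)$, and the sample-size condition $m\ge m_{\mtest}(n,\xi(n)/2,\xi(n))$ holds.

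\textbf{Step 1: bound $\xi(n)$.} From $M(n)=\lceil n^{1-\beta}\rceil$ we get $n^{-\beta}\le \xi(n)\le n^{-\beta}+1/n\le 2n^{-\beta}$. Hence $\xi(n)\in(0,1)$ as soon as $n>2^{1/\beta}$.

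\textbf{Step 2: bound the tester's sample requirement.} Plugging into the definition,
\[
m_{\mtest}(n,\xi(n)/2,\xi(n))=\frac{4\cdot 18\cdot 64\,\sqrt{n}\,\ln(2/\xi(n))}{\xi(n)^2}\le 4608\, n^{1/2+2\beta}\ln(2n^{\beta}),
\]
using $\xi(n)\ge n^{-\beta}$ in both the denominator and the logarithm. On the other side, $m(n)=\lfloor n^{1/2+3\beta}\rfloor\ge n^{1/2+3\beta}-1$. The gap between the two is a factor $n^{\beta}$, which dominates $4608\ln(2n^\beta)$ for $n$ large. So there is a threshold $C_1(\beta)$, depending only on $\beta$, such that for all $n\ge C_1(\beta)$ we have $m(n)\ge m_{\mtest}(n,\xi(n)/2,\xi(n))$. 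Monotonicity in $m$ then gives the condition for every $m\ge m(n)$.

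\textbf{Step 3: conclude.} Taking $C_1(\beta)$ to be the maximum of the thresholds from Steps 1 and 2, Lemma~\ref{lemma:uniformity_tester} applies verbatim and yields both bullets. The only delicate point I expect is bookkeeping. The factor $\xi/2$ appears twice: once in the hypothesis $m\ge m_{\mtest}(n,\xi/2,\delta)$ of Lemma~\ref{lemma:uniformity_tester}, and once in the wrapper $\tunif$ calling $\tunifstandard_Y(\xi/2,\cdot,\cdot)$. I must apply the lemma's stated hypothesis as given and not halve $\xi$ again. Integrality issues in splitting $S$ into $\ell$ blocks are absorbed into the lemma, following the paper's hands-off convention. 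Beyond that, the argument is just the explicit polynomial-versus-logarithm comparison defining $C_1(\beta)$.
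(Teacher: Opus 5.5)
Your proposal is correct and follows the intended argument. The paper does not reprove this lemma (it defers to \citet{pmlr-v336-dughmi26a}); the lemma is simply Lemma~\ref{lemma:uniformity_tester} instantiated with $Y=\X_n$ and $\xi=\delta=\xi(n)$, reading $\D(n)$ as the uniform distribution on $\X_n$. Your check that $m(n)=\lfloor n^{1/2+3\beta}\rfloor$ eventually exceeds $m_{\mtest}(n,\xi(n)/2,\xi(n))\le 4608\,n^{1/2+2\beta}\ln(2n^{\beta})$ is exactly the bookkeeping needed to produce $C_1(\beta)$.
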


We now discuss each case for $\D$ separately.

\begin{enumerate}
%-------------------------CASE1----------------------------
%%-------------------------------------------------------
    \item {$\D = \D(n)$.} 
    
    We first upper bound the error of $\C_n$ and show it is never larger than $\epsilon_n(\D(n),m) = 6\xi(n)$. We know from Lemma~\ref{lemma:uniformity_tester_Xn} that with probability at least $1-\xi(n)$ over $S \sim \D^{m}$, $\tunif_{S}(\xi(n),\xi(n),S)$ accepts the underlying distribution as $\D_S$ and thus $O = 1$. Therefore, we have $\expects{S \sim \D^{m}}{\C_S(T)} \leq \xi(n) + 5\xi(n) = 6\xi(n)$. 
    
    We now upper bound the error of the learner $\A_{n,G}$. Two possibilities can happen: either $h^* \in \H(n,G)$ or $h^* \notin \H(n,G)$. In the first possibility, either $h^* = f_{n,g_n}$ in which case $f_{n,g_n}$ has zero error or $h^* \in \H(n)$ in which case the probability of the minority label of $h^*$ is $\xi(n)$ and either $\hone$ or $\hzero$ will have error at most $\xi(n)$ on $\D$.

    In the second possibility where $h^* \notin \H(n,G)$ we know that $h^*(x) = 1$ for all $x\in \X_n$. Hence, $\D[\{x: h^*(x) = 1\}] =1$ and therefore $\D[\{x: h^*(x) \neq \hone(x)\}] = 0$ and $\hone$ will have zero error.

    Let $\hat{h} \in \arg\min_{h \in \{f_{n,g_n},\hone,\hzero\}}L(h,\D_{h^*})$.  We showed that in both cases for $h^*$, there exists some $h \in \{f_{n,g_n},\hone,\hzero\}$ such that $L(h,\D_{h^*}) \leq \xi(n)$. This implies that $L(\hat{h},\D_{h^{*}})\leq \xi(n)$.  
    
    From Equation~\ref{eq:h1_h0_concentration} and since $\A_{n,G}(S) \in \arg\min_{h \in \{f_{n,g_n},\hone,\hzero\}}L(h,S)$, we get with probability at least $1-\xi(n)$ over $S \sim \D^m$ that
\[
L(\A_{n,G}(S),\D_{h^{*}}) \leq L(\A_{n,G}(S),S) + \xi(n) \leq L(\hat{h},S) +\xi(n) \leq L(\hat{h},\D_{h^{*}}) + 2\xi(n)\leq 3\xi(n).
\]
Therefore, noting that $\C_n(S) \geq 5\xi(n)$ for all $S$, we get that 
\[
\epsilon_{\A_S}(\D_S,m) = \sup_{h^{*} \in \H} \expects{S \sim \D_{h^*}^m}{L(\A_{n,G}(S),\D_{h^{*}})} \leq 4\xi(n) \leq  \expects{T \sim \D^m}{\C_{S}(T)} \leq 6\xi(n) = \epsilon_n(\D_S,m).
\]

%-------------------------CASE2----------------------------
%%-------------------------------------------------------
\item $\dtv(\D,\D_S) \leq \xi(n)$.
In this case, we show that the error of the certifier can still bound the error of the learner. We consider the two possibilities that can happen: $h^{*} = f_{n,g_n}$ or $h^{*} \in \HH(G) \setminus \{f_{n,g_n}\}$.

In the first possibility, we have $L(f_{n,g_n},\D_{h^*}) = 0$. In the second possibility, let $b$ denote the majority label of $h^*$ under $\D$ and let $p = \D[\{x\in\X: h^*(x) = b\}]$. Since $h^* \neq f_{n,g_n}$, we either have $h^*\in \H(n)$ or $h^*\notin \H(n)$. Since $\dtv(\D,\D(n)) \leq \xi(n)$, we observe that for $b=1$, if $ h^{*} \in \H(n)$, we have $\D(n)[\{x\in \X_n: h^{*}(x)=1\}] = 1-\xi(n)$ and hence $p\geq 1-2\xi(n)$. This is due to the fact that if by the sake of contradiction, we assume for $h^{*}$ we have $\D(n)[\{x\in \X_n: h^{*}(x)=1\}] =\xi(n)$, then we would get $p <2 \xi(n)< 1/2$, which is a contradiction to $b=1$ being the majority label. If $ h^{*} \notin \H(n)$, we have $\D(n)[\{x\in \X_n: h^{*}(x)=1\}] = 1$ and $p\geq 1-\xi(n)$. On the other hand, if $b=0$, we have $h^{*} \in \H(n)$. To see this, assume by the sake of contradiction that $h^{*}\notin \H(n)$. Then we get that $\D(n)[\{x\in \X_n: h^{*}(x)=0\}] = 0$ and thus $\D[\{x\in \X_n: h^{*}(x)=0\}] \leq \xi(n)<1/2$ which is a contradiction to $0$ being the majority label. Therefore, $\D(n)[\{x\in \X_n: h^{*}(x)=0\}] = 1-\xi(n)$ and $p \geq 1-2\xi(n)$. 
Thus, in the second possibility we always have that $1-2\xi(n) \leq p \leq 1$. Moreover, observe that $L(\hone,\D_{h^*}) = \D[\{x: h^*(x) \neq 1\}]$ and $L(\hzero,\D_{h^*}) = \D[\{x: h^*(x) \neq 0\}]$. Therefore, in the second possibility we have $\min_{h\in\{\hone,\hzero\}}L(h,\D_{h^*}) \leq 1-p \leq 2\xi(n)$.

We proved that in any case, we have $L(\hat{h},\D_{h^*}) \leq 2\xi(n)$ for $\hat{h} \in \argmin_{h\in\{f_{n,g_n},\hone,\hzero\}} L(\hat{h},\D_{h^*})$. Similar to the previous case, from $\A_{n,G}(S) \in \arg\min_{h \in \{f_{n,g_n},\hone,\hzero\}}L(h,S)$ and Equation~\ref{eq:h1_h0_concentration}, we get with probability at least $1-\xi(n)$ over $S \sim \D^m$ that
$
L(\A_{n,G}(S),\D_{h^{*}}) \leq L(\hat{h},\D_{h^{*}}) + 2\xi(n)\leq 4\xi(n)$. Since $\C_n(S) \geq 5\xi(n)$ for all $S$, we get that 
\[
\epsilon_{\A_S}(\D_S,m) = \sup_{h^{*} \in \H} \expects{S \sim \D_{h^*}^m}{L(\A_{n,G}(S),\D_{h^{*}})} \leq 5\xi(n) \leq  \expects{T \sim \D^m}{\C_{S}(T)} \leq \epsilon_n(\D_S,m) = 1.
\]

%-------------------------CASE3----------------------------
%%-------------------------------------------------------
\item {\bf $\dtv(\D,\D(n)) > \xi(n)$.} 
  In this case, we can invoke Lemma~\ref{lemma:uniformity_tester_Xn} to get that with probability at least $1-\xi(n)$ over $S \sim \D^{m}$, $\tunif_{\X_n}(\xi(n),\xi(n),S)$ (Algorithm~\ref{alg:uniformity test}) rejects and outputs $0$. Therefore, $\C_{n}$ outputs $1$ with probability at least $1-\xi(n)$ and $\expects{S \sim \D^{m}}{\C_{\maj,n}(S)} \geq 1-\xi(n)$. 
  Let $b$ denote the majority label of labeling function $h^{*}\in \HH$ under distribution $\D$ and denote  $p : = \D[\{x: h^{*}(x) =b \}] \geq 1/2$. Since $L(\hone,\D_{h^*}) = \D[\{x: h^*(x) \neq 1\}]$ and $L(\hzero,\D_{h^*}) = \D[\{x: h^*(x) \neq 0\}]$, we have that $L(\hat{h},\D_{h^*}) \leq 1-p \leq 1/2$ for $\hat{h} \in \arg\min_{h \in \{f_{n,g_n},\hone,\hzero\}}L(h,\D_{h^*})$. Therefore, similar to the analysis in the previous case, we get with probability at least $1-\xi(n)$ over $S \sim \D^m$ that $ L(\A_{n,G}(S),\D_{h^{*}}) \leq L(\hat{h},\D_{h^{*}}) + 2\xi(n)\leq 1/2 + 2\xi(n)$. Hence, we get that 
  \[
\epsilon_{\A_S}(\D_S,m) = \sup_{h^{*} \in \H} \expects{S \sim \D_{h^*}^m}{L(\A_{n,G}(S),\D_{h^{*}})} \leq \frac{1}{2}+3\xi(n) \leq  \expects{T \sim \D^m}{\C_{S}(T)} \leq \epsilon_n(\D_S,m) = 1,
\]
where we used the fact that $1/2+3\xi(n) \leq 1-\xi(n)$ for large $n$.\hfill\qed
\end{enumerate}

\subsection{Proof of Lemma~\ref{lemma:fixed_query_reveal}}\label{app:pf_fixed_query_reveal}
We will use the following lemma in the proof of Lemma~\ref{lemma:fixed_query_reveal} to bound the size of subsets of boolean cube that have bounded \emph{diameter}.
\begin{lemma}
\label{lemma:cube-diameter}
 Let $\F \subseteq\{0,1\}^d$. For any $f,f'\in \F$, define $f\Delta f' = \{i\in [d]: f(i)\neq f'(i)\}$ and denote $\diam(\F) =\max_{f,f'} |f\Delta f'|$. Assume that $\diam(\F)\leq d-r-1$ for some integer $r$. Then, we have
\[
  \frac{|\F|}{2^d} \leq \exp\left(-\frac{r^2}{2d}\right).
\]
\end{lemma}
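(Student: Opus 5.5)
The plan is to prove Lemma~\ref{lemma:cube-diameter} by a probabilistic, center-of-mass argument: a set of small diameter must be concentrated near a Hamming ball, and Hamming balls of radius well below $d/2$ are exponentially small. Concretely, I would combine two facts: (i) any $\F$ with $\diam(\F)\le d-r-1$ fits inside a Hamming ball of radius at most $(d-r-1)/2$ around some point, in a suitable averaged sense; (ii) a Hoeffding/Chernoff bound for the binomial distribution.

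\textbf{Preferred route.} Rather than a Kleitman-type ball containment, I would use a short averaging trick.

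First, pick a uniformly random $z\in\{0,1\}^d$ and consider its antipode $\bar z$. For $f,f'\in\F$ we have $|f\Delta z|+|f'\Delta \bar z|\ge |f\Delta \bar z|+\ldots$. A cleaner formulation uses the triangle inequality on distances to $z$ and $\bar z$:
\[
|f\Delta f'| \;\ge\; |f\Delta \bar z| - |f'\Delta \bar z| .
\]
This is not quite the right inequality, so instead I would use the identity $|f\Delta \bar z| = d-|f\Delta z|$. Then $|f\Delta f'|\ge d-|f\Delta z|-|f'\Delta z|$ for all $z$. So if both $f,f'$ were at distance $<(r+1)/2$ from $z$, their distance would exceed $d-r-1$. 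Turned around, this says $\F$ cannot contain two points that are both close to the same $z$ in the antipodal sense.

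Second, this is the wrong direction for an upper bound on $|\F|$. So I would switch to an explicit cube-coloring (translation) argument. Fix $f_0\in\F$. Every $f\in\F$ satisfies $|f\Delta \bar f_0| = d-|f\Delta f_0|\ge r+1$. This does not by itself bound $|\F|$, since $\F$ could be a large ball. The real content is the isodiametric inequality, so I would commit to Kleitman's diameter theorem: among subsets of $\{0,1\}^d$ of diameter at most $2t$, Hamming balls of radius $t$ are largest, and for odd diameter $2t+1$, the extremal sets are products of a ball with an edge.

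\textbf{Key steps in order.}
\begin{enumerate}
\item Apply Kleitman's theorem with diameter $D=d-r-1$. This gives $|\F|\le 2\cdot|B(\lfloor (D-1)/2\rfloor)|$ in dimension $d-1$, or $|B(D/2)|$ in dimension $d$.
\item Bound the ball volume with Hoeffding:
\[
\frac{|B(\rho)|}{2^d}=\Pr\!\left[\mathrm{Bin}(d,\tfrac12)\le \rho\right]\le \exp\!\left(-\frac{2(d/2-\rho)^2}{d}\right).
\]
With $\rho=(d-r-1)/2$, we get $d/2-\rho=(r+1)/2$, so the bound is $\exp(-(r+1)^2/(2d))\le \exp(-r^2/(2d))$.
\item In the odd case, the product-with-an-edge form gives the same bound in dimension $d-1$ with $\rho=(d-r-2)/2$. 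Then $(d-1)/2-\rho=(r+1)/2$, and the factor $2$ cancels the $2^{d-1}$ versus $2^d$ normalization. The result is $\exp(-(r+1)^2/(2(d-1)))\le\exp(-r^2/(2d))$.
\end{enumerate}

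\textbf{Main obstacle.} The main obstacle is the reliance on Kleitman's theorem. If I want to be self-contained, I would replace it by a weaker argument that still yields a bound of the form $\exp(-\Omega(r^2/d))$, for instance via compression/shifting down to a down-set followed by a ball comparison. The cost is that the sharp constant $1/2$ might need to be tracked carefully.
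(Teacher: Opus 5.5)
Your committed route, Kleitman's diameter theorem followed by Hoeffding's bound on the lower tail of $\mathrm{Bin}(d,\tfrac12)$, is correct and is essentially the paper's proof. The only difference is that the paper avoids the odd-diameter case by noting $d-r-1\le 2\lfloor (d-r)/2\rfloor$ and using only the even-diameter (Hamming ball) bound, which costs one unit of $r$ and explains the $r^2$ rather than $(r+1)^2$ in the exponent; you instead invoke the ball-times-edge bound for odd diameter, and your opening antipodal/averaging detour plays no role and can be dropped.
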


\begin{proof}
We know from 
\citet{kleitman1966combinatorial} that the bound $2k$ on the diameter of $\F$ implies
\[ |\F| \leq  \sum_{j=0}^{k} {d \choose j}.
\]
Let $k=\left\lfloor\frac{d-r}{2}\right\rfloor$. Since $\diam(\F) \leq d-r-1$, we can apply the above bound to conclude that 
\[|\F| \leq \sum_{j=0}^{\lfloor(d-r)/2\rfloor}{d \choose j}.
\]
Let $Z$ denote the random variable that is the sum of $d$ random bits. It is easy to observe that 
\[
\begin{aligned}
    \frac{|\F|}{2^d}
    &\leq \frac{1}{2^d}\sum_{j=0}^{\lfloor(d-r)/2\rfloor}{d \choose j} 
    &=\prob{ Z\leq \left\lfloor\frac{d-r}{2}\right\rfloor} 
    &\leq
    \prob{Z\le\frac{d-r}{2} } 
    &=
    \prob{Z-\frac {d}{2}\le-\frac r2}.
\end{aligned}
\]
Since $\expect{Z} = d/2$ we can apply a Hoeffding's bound to get that
\[
     \frac{|\F|}{2^d}
    \leq \prob{Z-\frac {d}{2}\leq -\frac {r}{2}}\leq \exp\left( -\frac{2(r/2)^2}{d}\right) = \exp\left(-\frac{r^2}{2d}\right).
\]
\end{proof}
We now restate Lemma~\ref{lemma:fixed_query_reveal} and prove it.
\fixedQueryReveal*
\begin{proof}
Since $f_{n,g_n}$ and every $h\in\H(n)$ agree outside $\X_n$, we know for every $h\in \H(n)$ that \[C_T:= |\{(x,y)\in T: x\notin \X_n,\, h(x) \neq y\}| = |\{(x,y)\in T: x\notin \X_n,\, f_{n,g_n}(x) \neq y\}|.\]
For any $x\in\X_n$ and $b\in \{0,1\}$, define
\[
    N_b^T(x) =  \left|\{(x',y)\in T:x'=x,\ y=b\}\right|,
\]
and let $q_T(x):=N_1^T(x)-N_0^T(x)$.
We can verify that for every $h \in \H(n)$,
\[ L(h,T)  =\frac{1}{T}C_T +\frac{1}{T} \sum_{x\in\X_n}N_{1-h(x)}^T(x) =\frac{1}{|T|}\sum_{x\in\X_n}N_1^T(x) - \frac{1}{T}\sum_{x\in\X_n}q_T(x)h(x) + \frac{1}{T}C_T.
\]  Therefore,
\[
    L(f_{n,g_n},T) < \min_{h\in\H(n)}L(h,T)
\]
if and only if
\begin{equation}\label{eq:query_linear_form}
    \sum_{x\in\X_n}q_T(x)g_n(x) >  \max_{h\in\H(n)} \sum_{x\in\X_n}q_T(x)h(x).
\end{equation}

Now let $ V:=\X_n\setminus S_0$, and note that $|V|=n-s$. Conditioned on $g_n|_{S_0}=a$, the restriction $g_n|V$ is uniformly
random over $\{0,1\}^V$. Define
\[
    E_{T,a} = \left\{  z\in\{0,1\}^{V}:  \sum_{x\in S_0}q_T(x)a(x)  +  \sum_{x\in V}q_T(x)z(x)   > \max_{h\in\H(n)} \sum_{x\in\X_n}q_T(x)h(x)
    \right\}.
\]
and observe that \[
    \probs{g_n}{ L(f_{n,g_n},T) < \min_{h\in\H(n)}L(h,T) \,\middle|\, g_n|{S_0}=a } =\frac{| E_{T,a}|}{2^{n-s}}.
\] 

We therefore bound $ \frac{| E_{T,a}|}{2^{n-s}}$.

For every $z,z'\in\{0,1\}^V$ define their disagreement as $z\Delta z' = \{x\in V: z(x) \neq z'(x)\}$. Denote $ \diam( E_{T,a}) = \max_{z,z'\in E_{T,a}} |z\Delta z'|$.

We claim that $\diam( E_{T,a})  <  n-M(n)$. Assume for the sake of contradiction that there exist $z,z'\in E_{T,a}$ such
that $|z\Delta z'|\geq n-M(n)$.
Let $ J=\{x\in V:z(x)=z'(x)\}$ and note that 
\[
    |J| = |V|- |z\Delta z'| =(n-s)- |z\Delta z'| \leq M(n)-s.
\]
Let $c,c'\in\{0,1\}^{\X_n}$ be the labelings obtained by extending $z,z'$ with $a$ on $S_0$. Define two hypotheses $h_0,h_1 \in \{0,1\}^{\X}$ as follows. For any $x\notin \X_n$, $ h_0(x)=h_1(x)=1$.
For any $\X_n$, let
\[
h_0(x)
=
\begin{cases}
a(x), & x\in S_0,\\
z(x), & x\in J,\\
0, & x\in V\setminus J,
\end{cases}
\qquad \text{and} \qquad
h_1(x)
=
\begin{cases}
a(x), & x\in S_0,\\
z(x), & x\in J,\\
1, & x\in V\setminus J.
\end{cases}
\]
We have $|\{x\in \X_n: h_0(x) = 1\}|\leq |S_0|+|J| \leq  s+(M(n)-s) =  M(n)$,
and hence $h_0\in\H^{(1)}(n)$.  Similarly,
$h_1\in\H^{(0)}(n)$. Therefore, $h_0,h_1\in\H(n)$.

Observe that based on the definition of $h_0$ and $h_1$, for any $x\in \X_n$ we have $h_0(x) = h_1(x)$ if and only if $c(x) = c'(x)$. Hence, $h_0(x) + h_1(x) = c(x) + c'(x)$ for all $x\in \X_n$ and
\[
\frac{1}{2}\left(
    \sum_{x\in\X_n}q_T(x)h_0(x)  + \sum_{x\in\X_n}q_T(x)h_1(x)
\right) = \frac{1}{2}\left(\sum_{x\in\X_n}q_T(x)c(x)  + \sum_{x\in\X_n}q_T(x)c'(x)\right).
\]
Since $z,z'\in E_{T,a}$, we know $\sum_{x\in\X_n}q_T(x)c(x) > \max_{h\in\H(n)} \sum_{x\in\X_n}q_T(x)h(x)$ and $\sum_{x\in\X_n}q_T(x)c'(x) > \max_{h\in\H(n)} \sum_{x\in\X_n}q_T(x)h(x)$, and hence
\[
\frac{1}{2}\left(\sum_{x\in\X_n}q_T(x)c(x)  + \sum_{x\in\X_n}q_T(x)c'(x)\right) > \max_{h\in\H(n)} \sum_{x\in\X_n}q_T(x)h(x)
\]
On the other hand, $h_0,h_1\in\H(n)$ and we have 
\[
\frac{1}{2}\left(
    \sum_{x\in\X_n}q_T(x)h_0(x)  + \sum_{x\in\X_n}q_T(x)h_1(x)
\right) \leq \max_{h\in\H(n)} \sum_{x\in\X_n}q_T(x)h(x),
\]
which is a contradiction. This proves
$ \diam(E_{T,a})  <  n-M(n)$.

We can now apply Lemma~\ref{lemma:cube-diameter} (with $d = n-s$ and $r = M(n)-s$) to conclude that 
\[
    \frac{|E_{T,a}|}{2^{n-s}}
    \leq \exp\left(-\frac{(M(n)-s)^2}{2(n-s)} \right),
\]
Hence, we have \[
    \probs{g_n}{ L(f_{n,g_n},T) < \min_{h\in\H(n)}L(h,T) \,\middle|\, g_n|{S_0}=a }\leq  \exp\left(-\frac{(M(n)-s)^2}{2(n-s)} \right),
\]
as desired.
\end{proof}

\subsection{Proof of Lemma~\ref{lemma:poly_query_reveal}}\label{app:pf_poly_query_reveal}
Let $E$ denote the event that $\O_{\HH(G)}$ returns $f_{n,g_n}$ in at least one oracle call, and let $B_n:=\{f_{n,g_n}\in\H(n)\}$. Since any member of $\H(n)$ has minority size at most $M(n)$, we can easily prove that 
\[ \probs{g_n}{B_n\mid g_n|{S_0}=a}
\leq 2\exp\left( -\frac{(n-s-2M(n))^2}{2(n-s)} \right).
\]
We can write
\[
\probs{g_n}{E\mid g_n|{S_0}=a} \leq
\probs{g_n}{B_n\mid g_n|{S_0}=a} + \probs{g_n}{E\cap B_n^c\mid g_n|_{S_0}=a}.
\]
We now bound the second term. For each \(t\in[K]\), let \(E_t\) denote the event that \(B_n^c\) occurs and \(f_{n,g_n}\) is returned for the first time at the \(t\)th oracle call. Then $E\cap B_n^c=\bigcup_{t=1}^K E_t$.

Fix $t\in[K]$. Consider the query $T_t$ made by the learner at round $t$ along the interaction in which $f_{n,g_n}$ has not been returned in any of the first $t-1$ rounds. By construction of the oracle, all oracle responses before the first return of $f_{n,g_n}$ are determined without using the unseen labels of $g_n$. Hence, conditioned on the labeled sample, the test point, and the learner's internal randomness, $T_t$ is fixed with respect to the remaining randomness $g_n|{\X_n\setminus S_0}$.
If $E_t$ occurs, then the actual interaction has not returned $f_{n,g_n}$ during the first $t-1$ rounds, and therefore the actual query at round $t$ is precisely $T_t$. Therefore, for $E_t$ to occur, we must have
\[    L(f_{n,g_n},T_t)  < \min_{h\in\H(n)}L(h,T_t).
\]
Since $T_t$ is fixed with respect to the unseen labels of $g_n$,
Lemma~\ref{lemma:fixed_query_reveal} gives
\[ \probs{g_n}{ E_t\,\middle|\, g_n|_{S_0}=a}
\leq \exp\left(  -\frac{(M(n)-s)^2}{2(n-s)}\right).
\]
Taking a union bound over all $t\in[K]$,
\[
\begin{aligned}
&\probs{g_n}{ \text{$\O_{\HH(G)}$ returns $f_{n,g_n}$ in some oracle call} \,\middle|\, g_n|{S_0}=a} \\
&\leq \probs{g_n}{B_n\mid g_n|{S_0}=a}  + \sum_{t=1}^K \probs{g_n}{ E_t \,\middle|\, g_n|{S_0}=a} \\
&\leq (K+1)\exp\left( -\frac{(M(n)-s)^2}{2(n-s)}\right),
\end{aligned}
\]
where we used the fact that for sufficiently large $n$, $2\exp\left( -\frac{(n-s-2M(n))^2}{2(n-s)} \right) \leq \exp\left( -\frac{(M(n)-s)^2}{2(n-s)}
\right)$. This completes the proof.\hfill \qed

\subsection{Proof of Lemma~\ref{lemma:oracle_expected_error}}\label{app:pf_oracle_expected_error}

Fix the labeled sample $S$, and let $S_0 =\dom(S)\sse\X_n$ be the set of distinct points appearing in $S$.  Denote  $s=|S_0|\leq m$
and let $a:=g_n|{S_0}$ be the observed labeling. Fix a test point $x$ and also the internal randomness of $\A$.  

Conditioned on all above, $g_n|{\X_n\setminus S_0}$ is uniformly distributed over $\{0,1\}^{\X_n\setminus S_0}$.
Consider the prediction $\widetilde y$ of $\A$ determined by following the sequence of oracle interactions for which $f_{n,g_n}$ is never returned. By construction of the oracle, $\widetilde y$ is determined independently of $g_n|{\X_n\setminus S_0}$ and has expected error $1/2$ on any $x\in \X_n\setminus S_0$. Let $E$ denote the set of all $g_n$ for which $\O_{\HH(G)}$ returns $f_{n,g_n}$ in at least one oracle call. Whenever $E$ does not occur, the actual oracle interaction coincides with the above interaction, and hence the prediction of $\A$ is $\widetilde y$. Therefore, for every $g_n$ we have 
\[
\id\left[ \A(\D(n),S)(x)\neq g_n(x)
\right] \geq
\id[\widetilde y\neq g_n(x)]
-
\id[E].
\]
Consequently, we get for any $x\in \X_n \setminus S_0$ that 
\[
    \probs{g_n}{
    \A(\D(n),S)(x)\neq g_n(x)
    \,\middle|\,
    g_n|{S_0}=a} \geq
\frac{1}{2}-\probs{g_n}{E}\]
Since $\D(n)$ is uniform on
$\X_n$, we know $\probs{x\sim\D(n)}{x\notin S_0} =1-s/n$. Thus, using Lemma~\ref{lemma:poly_query_reveal} to bound the probability of $E$, we get
\[
\expected{g_n}\expects{x\sim\D(n)}{
    \id\left[ \A(\D(n),S)(x)\neq g_n(x)
    \right] \,\middle|\, g_n|{S_0}=a}
\geq \frac {1}{2}\left(1-\frac{s}{n}\right) -(K+1)\exp\left( -\frac{(M(n)-s)^2}{2(n-s)}\right).
\]
Since $s\leq m$, we have $1-s/n \geq 1-m/n$.
Moreover, since $s,m <M(n)$ we know
\[
    \frac{(M(n)-s)^2}{n-s}
    \geq
    \frac{(M(n)-m)^2}{n-m}.
\]
Averaging over the labeled sample and the internal randomness of
$\A$ concludes 
\[
\expected{g_n}\expects{x\sim\D(n)}{
    \id\left[ \A(\D(n),S)(x)\neq g_n(x)
    \right] }
\geq \frac {1}{2}\left(1-\frac{m}{n}\right) -(K+1)\exp\left( -\frac{(M(n)-m)^2}{2(n-m)}\right).
\]
For every fixed $g_n$, since $f_{n,g_n}\in\HH(G)$, we know
$\epsilon_{\A}(\D(n),m)$ is at least the expected error of $\A$ on the
target $f_{n,g_n}$.  Averaging this inequality over $g_n$ proves the claim.\hfill\qed

\end{document}